\newtheorem{assumption}{Assumption}
\newtheorem{theorem}{Theorem}
\newtheorem{lemma}{Lemma}
\DeclareMathOperator{\E}{\mathbb{E}}
\DeclareMathOperator{\constrained}{C}
\DeclareMathOperator{\MMD}{MMD}
\DeclareMathOperator{\Pp}{P}
\DeclareMathOperator{\L2A}{L2A}
\DeclareMathOperator{\optim}{O}
\DeclareMathOperator{\simp}{S}
\begin{document}

\title{Learning how to approve updates to machine learning algorithms in non-stationary settings}

\author{
Jean Feng\\
Department of Epidemiology and Biostatistics, University of California, San Francisco
}


\maketitle


\begin{abstract}
{
Machine learning algorithms in healthcare have the potential to continually learn from real-world data generated during healthcare delivery and adapt to dataset shifts.
As such, the FDA is looking to design policies that can autonomously approve modifications to machine learning algorithms while maintaining or improving the safety and effectiveness of the deployed models.
However, selecting a fixed approval strategy, a priori, can be difficult because its performance depends on the stationarity of the data and the quality of the proposed modifications.
To this end, we investigate a learning-to-approve approach (L2A) that uses accumulating monitoring data to learn \textit{how} to approve modifications.
L2A defines a family of strategies that vary in their ``optimism''---where more optimistic policies have faster approval rates---and searches over this family using an exponentially weighted average forecaster.
To control the cumulative risk of the deployed model, we give L2A the option to abstain from making a prediction and incur some fixed abstention cost instead.
We derive bounds on the average risk of the model deployed by L2A, assuming the distributional shifts are smooth.
In simulation studies and empirical analyses, L2A tailors the level of optimism for each problem-setting: It learns to abstain when performance drops are common and approve beneficial modifications quickly when the distribution is stable.
}
\end{abstract}


\section{Introduction}

Due to the rapid development of artificial intelligence (AI) and machine learning (ML), an increasing number of medical devices and clinical decision support software now rely on AI/ML algorithms.
The current regulatory policies of the Center of Diagnostics and Radiologic Health (CDRH) at the US Food and Drug Administration (FDA) require algorithms within such systems to be locked, i.e. the algorithms cannot change post-approval.
However, the performance of locked prediction algorithms can degrade over time due to changes in clinical practice patterns, shifts in the patient population, and more \citep{Minne2012-rb, Davis2017-bl}.
Thus, there is growing interest in deploying continuously evolving ML systems that learn from data generated during healthcare delivery \citep{Kelly2019-sa, Nestor2019-mh, Ghassemi2020-do, Li2020-lw}, and has long been the object of study in a number of fields, including continual/lifelong learning \citep{Thrun1995-qw}, online learning \citep{Shalev-Shwartz2012-vg}, meta-learning \citep{Bengio1991-sl}, and adaptive analyses \citep{Dwork2015-da}.
In contrast to locked systems, continually evolving systems have the potential to adapt to distributional shifts, better reflect real-world settings, and incorporate new advancements in ML \citep{Lee2020-os}.
Because evolving algorithms present many new regulatory challenges, the FDA recently proposed a new framework for approving modifications to AI/ML-based Software as a Medical Device (SaMD) in a discussion paper \citep{Fda2019}.

The discussion paper \citep{Fda2019} proposes companies stipulate SaMD Pre-specifications (SPS) and an Algorithm Change Protocol (ACP).
The SPS describes the types of changes the manufacturer plans to make; The ACP describes how they will ensure that the device remains safe and effective after the modifications.
Once the FDA approves the SPS and ACP, the manufacturer may deploy changes according to these documents without further intervention from the regulatory agency.
Here we study the component in the ACP that evaluates and approves modifications solely based on their performance, which we refer to as the pACP.

Designing a good pACP is not a trivial task, and \citet{Fda2019} does not provide specific guidelines or examples for how to do so.
Previous works have proposed hypothesis testing procedures for approving a candidate modification at a single time point \citep{Vergouwe2017-rh, Davis2019-hx, Davis2019-tw}.
However, one must be careful when translating these methods to multiple time points.
The error from each hypothesis test can accumulate over time and fail to protect against gradual performance deterioration, an issue more commonly known as ``bio-creep'' \citep{Fleming2008-ok}.
While one can protect against bio-creep using online hypothesis testing procedures \citep{Feng2019-lq}, such methods only control the online error-rate if the data is independently and identically distributed (IID) over the entire time period.
In practice, this IID assumption is unlikely to hold, particularly over long periods of time.
Thus, there is a need for online methods that can identify and approve beneficial modifications in non-stationary settings.

In this paper, we study how to design pACPs that control the online error-rate in the presence of distributional shifts.
We define the online error-rate as the cumulative risk.
This quantity has been extensively studied in the online learning literature and can also be viewed as a generalization of the utility function in the adaptive clinical trial literature \citep{Graf2015-yz, Simon2017-to, Simon2018-hx} to non-stationary settings.
To give the model developers flexibility in constructing algorithmic modifications, the developer is allowed to be adaptive, i.e. they may change how modifications are generated based on historical data.
Because ML algorithms may induce distributional shifts in unanticipated and undesirable ways, we allow the distributional shifts to be adaptive as well.
For example, ML algorithms can affect clinical decision-making as human experts develop trust or distrust in these systems \citep{Poursabzi-Sangdeh2018-dr, Kaur2020-ke}.
In addition, ML algorithms can induce significant feedback loops in the environments they operate in, and initially fair algorithms can become increasingly unfair \citep{Liu2018-fo, Chouldechova2018-bj, Hashimoto2018-jl, Obermeyer2019-kc}.


To protect against arbitrary distributional shifts, we frame the problem within the prediction-with-expert-advice paradigm \citep{Littlestone1994-rl, Herbster1998-vj, cesa2006prediction}.
In this setup, the pACP is thought to be playing a game against ``Nature,'' who may introduce adaptive or even adversarial distributional shifts.
At each time point, the pACP selects between candidate algorithmic modifications (experts), with the ultimate goal of minimizing the cumulative risk.
Although existing methods like the FixedShare and MarkovHedge have used this framework to continually update algorithms \citep{Kolter2005-xz, Kolter2007-au,Shalizi2011-gh}, they do not adequately address regulatory needs.
First, these methods are overly conservative in real-world settings and the derived error bounds for realistic time horizons are too wide to provide meaningful error control.
The reason is that these methods are targeted to protect against distributional shifts that are fully adversarial, which are unlikely to occur in practice.
Second, these methods do not control the absolute error---they only control the online error relative to the best sequence of modifications, which could be arbitrarily poor.

Because real-world clinical settings are neither perfectly IID over time nor replete with adversarial shifts, we consider a problem setting in between these two extremes.
In particular, we suppose the distributional shifts are bounded in terms of the maximum mean discrepancy \citep{Gretton2012-yw}, though they may still be adaptive within the prescribed bounds.
We investigate a learning-to-approve (L2A) approach that learns \textit{how} to approve algorithmic updates using labeled data sampled IID from the target population at each time point.
L2A defines a broad family of approval strategies, which encompasses strategies designed for IID data as well as adversarial setups.
It searches over this family by dynamically weighting candidate strategies according to the exponentially weighted average forecaster (EWAF), a classical solution from the prediction-with-expert-advice literature \citep{cesa2006prediction}.

Unlike previous works, L2A deploys selective prediction models, which are models that can choose to abstain from making a prediction.
This abstention option improves the reliability of ML algorithms in settings where abstaining is better than giving a misleading prediction \citep{Chow1970-uc, Cordella1995-aj, Geifman2017-rl, Bartlett2008-sn, Feng2019-of}.
Assuming there is some fixed cost $\delta > 0$ associated with abstaining, we show that the average risk of L2A never exceeds $\delta+ \epsilon$ for some non-inferiority margin $\epsilon > 0$.
This implies that L2A can safely search over unsafe approval strategies as long as the abstention option is available.
We analyze the operating characteristics of L2A in both simulation studies and real-world analyses.
In the presence of severe distributional shifts, L2A learns to approve modifications cautiously and abstain more; When the data are (relatively) stationary, it learns to approve modifications quickly and abstain less.

The paper is organized as follows.
Section~\ref{ref:setup} formalizes the problem.
Section~\ref{sec:methods} presents the family of approval strategies, how L2A searches over this family in an online manner, and risk bounds of this procedure.
We study the performance of L2A in simulation studies in Section~\ref{sec:simulations} and real-world datasets in Section~\ref{sec:real_data}.

\section{Approval process for algorithmic modifications}
\label{ref:setup}

In this section, we describe the framework and abstractions necessary to understand the approval process for modifications to AI/ML-based SaMDs.
Following \citet{Fda2019}, the modification process can be described as a cycle with three stages.
First, the manufacturer proposes a modification, which can be trained on the available monitoring data as well as external data.
This modification is added to a pool of candidate modifications and is, henceforth, locked.
Second, the pACP updates the approval status of each candidate modification based on the latest monitoring data and updates the deployed algorithm accordingly.
Third, a new batch of monitoring data is collected.
For simplicity, suppose these three stages are executed in the above order over a fixed grid of $T$ time points.

%

See Table~\ref{table:notation} in the Supplement for a summary of the notation used in this paper.

\paragraph{Patient population}
Let $\mathcal{P}$ be the family of joint distributions over the space of covariates $\mathcal{X}$ and range of possible outcomes $\mathcal{Y}$.
Given a distribution $P \in \mathcal{P}$, patients are represented by the random vector $(X, Y)$.
Let filtration $\mathcal{F}_t$ be the sigma algebra over all historical data---which includes the monitoring data, proposed models, and approvals by the pACP---observed up to time $t$.

Let $P_0 \in \mathcal{P}$ be the initial distribution.
To allow for adaptive distributional shifts, we define a sequence of distribution shift functions $\{p_t: t = 1,\dots,T\}$, where $p_t$ is a $\mathcal{F}_{t-1}$-measurable function that maps onto $\mathcal{P}$.
The distribution at time $t$ is the realized output from $p_t$, which we represent by $P_t$.
We use the notation $P_{t_1:t_2}$ to represent the uniform mixture of (realized) distributions from time $t_1$ to $t_2$, inclusive.


At each time $t$, we observe a batch of monitoring data that contains $n$ labeled observations drawn IID from $P_t$.
Let the empirical distribution of this batch be denoted $P_{t,n}$.
Note that this assumption implies that gold-standard labels can be obtained for all observations, which is not always an easy task.
For example, we exclude AI/ML-based SaMDs that predict the treatment effect, i.e. the difference in outcomes between receiving and not receiving treatment, since it is impossible to observe the counterfactual.
We leave the complexities of dealing with potential outcomes to future work.

\paragraph{Model developer}
Let $\mathcal{G}$ be a family of prediction models that map from $\mathcal{X}$ to $\tilde{\mathcal{Y}}$, where $\tilde{\mathcal{Y}}$ is the convex hull of $\mathcal{Y}$.
For example, in a binary classification problem, $\tilde{\mathcal{Y}}$ is a probability between $[0,1]$.
We represent modifications by an entirely new model in $\mathcal{G}$.
We evaluate models using a loss function $\ell: \tilde{\mathcal{Y}} \times \mathcal{Y} \mapsto [0,1]$ that is convex with respect to the first input.
The risk of a model $\tilde{g}\in \mathcal{G}$ with respect to a distribution $Q \in \mathcal{P}$ is $\E_{Q}\ell(\tilde{g}(X), Y)$.
Its empirical risk with respect to empirical distribution $Q_{n}$ is denoted $\E_{Q_n}\ell(\tilde{g}(X), Y)$.

At each time point, the model developer can propose a new modification in an adaptive fashion.
As such, we define the model development process using a sequence of functions $\{g_t: t = 1,\dots,T\}$, where $g_t$ is a $\mathcal{F}_{t - 1}$-measurable function with range $\mathcal{G}$.
The output from $g_t$ is the modification proposed at time $t$.
Let the realized output be denoted $G_t$.

\paragraph{Distributional shifts}
We assume that distributional shifts are bounded at each time point.
Similar smoothness assumptions have been advocated in previous works \citep{Rakhlin2011-du} as a more realistic representation of real-world problem settings.
We formalize this assumption in terms of the maximum mean discrepancy (MMD), a distance-measure between probability measures \citep{Gretton2012-yw}.
For any two distributions $Q_1, Q_2 \in \mathcal{P}$, define the MMD with respect to the function class $\{(x, y) \mapsto \ell(\tilde{g}(x), y) : \tilde{g} \in \mathcal{G} \}$ as
\begin{align}
\MMD(Q_1, Q_2)
:=
\sup_{\tilde{g} \in \mathcal{G}}
\left |
\E_{Q_1}  \ell(\tilde{g}(X), Y) - \E_{Q_2} \ell(\tilde{g}(X), Y)
\right |.
\label{eq:mmd}
\end{align}
So if the MMD between $P_t$ and $P_{t - 1}$ is bounded by a constant $V > 0$ for all $t = 1,...,T$, the maximum increase in risk for any candidate modification is at most $V$.
More generally, suppose there is a known window size $W$ and constant $V$ such that the MMD between $P_t$ and distributions from the last $W$ time points are bounded follows:
\begin{assumption}[Bounded maximum mean discrepancy]
	There exists some constant $V > 0$ and window size $W \in Z^+$ such that the distribution shift functions $\{p_1,\dots,p_T\}$ satisfy
	\begin{align}
	\Pr\left(
	\MMD_{\mathcal{L}}(P_{t}, P_{\max(0, t - w):t - 1}) \le V
	\quad
	\forall w = 1,\dots,W \text{ and } t = 1,\dots,T
	\right)
	= 1.
	\end{align}
	\label{assume:drift}
\end{assumption}
\noindent Let $\mathcal{D}_{V,W}$ be all sequences of distributional shift functions that satisfy Assumption~\ref{assume:drift} for a given $V$ and $W$.

\paragraph{An abstention option}
We allow the pACP to deploy prediction models with the option to abstain, also known as selective prediction models \citep{El-Yaniv2010-kd}.
We assume there is some known cost $\delta > 0$ for abstaining.
So, a selective prediction model incurs loss $\ell$ when it makes a prediction and $\delta$ when it abstains.
For notational convenience, define $G_0$ as a selective prediction model that always abstains.
In addition, define $\ell_{\delta}(z, y)$ to be the augmented loss function that is equal to $\ell(z, y)$ when $z$ is a prediction and $\delta$ when $z$ is the abstention option.


\paragraph{pACP}
At every time point $t$, the pACP evaluates all modifications proposed up to time $t$ using monitoring data up to time $t - 1$ and updates their approval status.
We generalize previous definitions of approval that only allowed ``hard'' approval of a single modification at each time point \citep{Feng2019-lq}.
Here we allow for ``soft'' approvals as well, where each candidate modification is assigned a probability weight.
From a practical viewpoint, soft approvals are not new to the regulatory space; They are analogous to outcome-adaptive randomization in clinical trials \citep{Berry1995-iz}, where the outcome in our setting is the prediction loss.
More specifically, we represent the approval status by a $(t+1)$-vector from the probability simplex $\Delta^{t + 1}$, where its first weight is the abstention rate and the remaining weights are associated with the candidate modifications.

We define the pACP as a sequence of functionals $\{\theta_t: t = 1,\dots,T\}$.
To represent the information available to the pACP at each time point, define filtration $\tilde{\mathcal{F}}_t$ as the sigma algebra over all historical data observed up to time $t - 1$ and the proposed model at time $t$.
Then, $\theta_t$ is a $\tilde{\mathcal{F}}_{t - 1}$-measurable function that maps to $\Delta^{t + 1}$.
Denote its realized output as $\hat{\theta}_t$.

Given approval status $\hat{\theta}_t$ at time $t$, the pACP deploys the stochastic selective prediction model $h_{\hat{\theta}_t}$, which abstains with probability $\hat{\theta}_{t,0}$ and predicts using an additive ensemble over the candidate modifications with probability $1 - \hat{\theta}_{t,0}$.
In particular, the ensemble is the weighted average $x \mapsto \left( \sum_{t' = 1}^{t}\hat{\theta}_{t,t'} G_{t'}(x) \right) / \left( \sum_{t' = 1}^{t} \hat{\theta}_{t,t'} \right )$.
The risk of the randomized algorithm $h_{\hat{\theta}_t}$ at time $t$ is then
$$
\E_{P_t} \left[
\ell_{\delta} \left (h_{\hat{\theta}_t}(X), Y \right )
\right]
= \hat{\theta}_{t,0} \delta
+ (1 - \hat{\theta}_{t,0}) \E_{P_t} \left[ \ell \left (\frac{\sum_{t' = 1}^{t}\hat{\theta}_{t,t'} G_{t'}(X)}{ \sum_{t' = 1}^{t} \hat{\theta}_{t,t'}}, Y \right ) \right]
.
$$
Over a time period of length $T$, the (realized) average risk of the pACP is
\begin{align}
\frac{1}{T}
\sum_{t = 1}^T
\E_{P_{t}}
\ell_{\delta}\left(
h_{\hat{\theta}_t}(X), Y
\right).
\label{eq:cum_risk}
\end{align}

For a given non-inferiority margin $\epsilon > 0$, our goal is to design a pACP such that its average risk will not exceed $\delta + \epsilon$ over all distributional shift patterns in $\mathcal{D}_{V,W}$ and all adaptive model development processes.
For $t = 1,\dots,T$, let $U_t$ be a vector of $n$ IID random variables drawn from the standard uniform distribution, which dictates how IID monitoring data is drawn from $P_t$.
We formally express the error-rate constraint as follows:
\begin{align}
\max_{
	\substack{
	\{p_t: t = 1,\dots, T\} \in \mathcal{D}_{V,W}\\
	\{g_t: t = 1,\dots, T\}}
}
\E_{U_1,\dots, U_T} \left[
\frac{1}{T}
\sum_{t=1}^{T}
\E_{P_{t}}
\ell_{\delta}\left(h_{\hat{\theta}_t}(X), Y\right)
\right]
\le \delta + \epsilon.
\label{eq:bound_form}
\end{align}
We would like to minimize \eqref{eq:cum_risk} as much as possible as long as \eqref{eq:bound_form} is satisfied.
We have defined this prioritization between controlling and minimizing the average risk to parallel the prioritization between Type I and II errors in hypothesis testing.

\section{Learning how to approve}
\label{sec:methods}

Since the best approval strategy is highly dependent on the problem setting, our approach is to design a pACP that learns \textit{how} to approve modifications, which we refer to as L2A.
To this end, we introduce a family of approval strategies that unifies a wide range of strategies that vary in their prior belief about the stationarity of the data and reliability of the model development process.
L2A learns an appropriate strategy by searching over this family.

\subsection{A family of approval strategies}
\label{sec:family}


In this section, we show how approval strategies can be defined as a sequence of penalized empirical risk minimization (ERM) problems.
The objective function in these ERM problems includes two regularization terms: one expresses our optimism in modifications proposed by the model developer, and the other expresses our optimism in our ability to predict future performance using historical data.
By varying the hyperparameters that scale these two regularization terms, we recover a wide range of strategies that includes or approximates previously proposed strategies.
Our formulation is inspired by the optimistic mirror descent algorithm \citep{Rakhlin2013-rj} but considers a much wider range of ``optimism.''

We denote the set of all possible sequences of hard approvals up to time $T$ using $\mathcal{S}_T$.
For convenience, suppose the hard approval sequences are ordered, so that $s_j \in \mathcal{S}_T$ denotes the $j$th hard approval sequence.
The $t$-th element of $s_j$, denoted $s_{j,t}$, can take on values in $\{0,\dots,t\}$, where zero corresponds to choosing the abstention-only model $G_0$.
To accommodate sequences of soft approvals, let $\Xi_T$ be the probability simplex over $\mathcal{S}_T$.
For $\xi \in \Xi_T$, $\xi_j$ is the probability assigned to hard approval sequence $s_j$.

At every time point, the approval strategy solves a penalized ERM problem over all soft approval sequences $\Xi_T$.
Given the solution $\hat{\xi}$ at time $t$, the approval status of modification $k$ at time $t$ is $\sum_{s_j \in \mathcal{S}_T} \hat{\xi}_{j} \mathbbm{1}\{s_{j,t} = k \}$.

The objective function of the ERM problem involves two regularization terms, $\mathcal{R}_{\eta_1}(\xi)$ and $\xi^\top M_{t}$, that represent our prior beliefs about the problem setting.
The function $\mathcal{R}_{\eta_1}: \Xi_T \mapsto \mathbb{R}$ for $\eta_1 \in [0,1]$ characterizes our prior belief that modifications from the model developer are beneficial.
A small $\eta_1$ encodes the prior belief that modifications are rarely beneficial and should be approved after thorough evaluation; a large $\eta_1$ encodes the prior belief that most modifications are beneficial and should be approved readily.
The second term $\xi^\top M_{t}$ represents our prior belief that future performance can be predicted using historical data and is scaled by a penalty parameter $\eta_2 > 0$.
The vector $M_t$ contains upper confidence bounds (UCBs) for the risk of all hard approval sequences, and the inner product $\xi^\top M_{t}$ is the bound for soft approval sequence $\xi$.
Because the bounds assume the data is stationary, a large value of $\eta_2$ corresponds to the prior belief that past performance predicts future performance.

We define a family of approval strategies by varying the penalty parameters $\eta_1$ and $\eta_2$ as well as a learning rate $\eta_3 \ge 0$.
At each time point, the approval strategy for hyperparameter $\boldsymbol{\eta} = (\eta_1,\eta_2,\eta_3)$ solves
\begin{align}
\begin{split}
\min_{\xi \in \Xi_T} &
\ \eta_3
\sum_{j = 1}^{|\mathcal{S}_T|} \left\{
\xi_j \sum_{t' = 1}^{t - 1} \E_{P_{t',n}} \left[
\ell_{\delta} \left (G_{s_{j,t'}}(X), Y \right )
\right]
\right \}
+ \eta_2 \xi^\top M_{t}
+ \mathcal{R}_{\eta_1}(\xi)
\\
\text{s.t.} &
\ \xi_j \le \mathbbm{1}\left\{M_{t',j} \le \delta + \tilde{\epsilon} \right\} \quad \forall t' = 1,\dots,t, \forall j = 1,\dots, t'
\end{split}.
\label{eq:approval_fam}
\end{align}
\noindent
The objective function is the weighted average of the empirical risk across all hard approval strategies with respect to $\xi$ plus the two aforementioned regularization terms.
The hyperparameters $\eta_1, \eta_2$, and $\eta_3$ specify the trade-off between minimizing these three terms.
We refer to $\eta_3$ as a learning rate since it governs how much the objective function and the solution to \eqref{eq:approval_fam} change in response to newly collected monitoring data.
The constraints in \eqref{eq:approval_fam} restrict approval to modifications with UCBs smaller than the abstention cost plus some margin $\tilde{\epsilon}$, which can be thought of as a non-inferiority margin for each time-point.
We discuss how to choose $\tilde{\epsilon}$ for \textit{overall} error-rate control in the next section.
Note that the feasible set is never empty because the abstention-only option always satisfies the constraints.

Approval strategies defined using \eqref{eq:approval_fam} differ from the optimistic mirror descent algorithm \citep{Rakhlin2013-rj} in two ways.
First, we scale the estimated risk bound $\xi^\top M_{t}$ by hyperparameter $\eta_2$, whereas the original formulation fixed $\eta_2$ to $1$.
By allowing for large values of $\eta_2$, our family of approval strategies spans a wider range of optimism, including those that would only be suitable for stationary settings.
Second, the added constraints disallow approval of modifications that performed poorly at previous time points.
In contrast, the original formulation takes a less aggressive approach and only down-weights such models.
However, we find that removing poor-performing modifications tends to be more effective in practice, since model performance is usually worse at later time points, not better.

\begin{figure}
	\centering
	\includegraphics[width=0.4\textwidth]{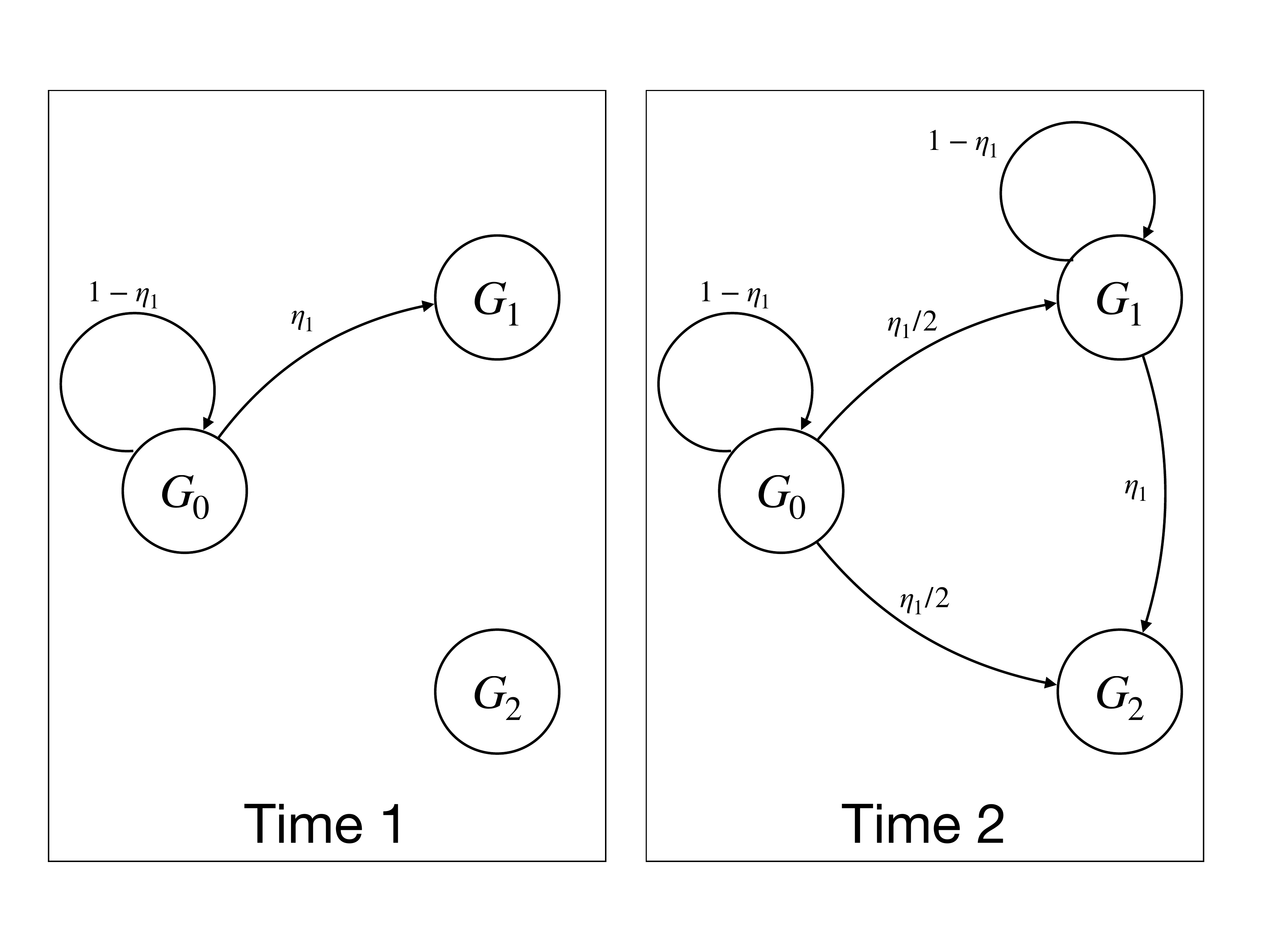}
	\caption{
		Example Markov chain prior over hard approval sequences for hyperparameter $\eta_1 \in [0,1]$.
		Node $G_0$ corresponds to the abstention-only model; $G_1$ and $G_2$ are the proposed modifications at times $t = 1$ and $2$, respectively.
		The arrows correspond to transition/approval probabilities and are labeled with transition probabilities, where an arrow returning too the current state means that nothing was approved.
		No arrows can enter $G_2$ at time $t = 1$ because it is not yet available.
		This prior approves a new modification with probability $\eta_1$, only allows transitions to the current modification or later modifications, and weights all later modifications equally.
	}
	\label{fig:markov_hedge}
\end{figure}

In general, solving \eqref{eq:approval_fam} is computationally intractable since it requires searching over all possible approval sequences.
Nevertheless, the approval status at time $t$ can be computed efficiently for certain choices of $\mathcal{R}_{\eta_1}$.
We draw inspiration from the MarkovHedge algorithm \citep{Shalizi2011-gh, Mourtada2017-dr}, which uses dynamic programming to efficiently search over all possible approval sequences.
The MarkovHedge places a Markov chain prior over all hard approval sequences with initial distribution $a \in \Delta^{T + 1}$ and transition probability matrices $A^{(2)},...,A^{(T)}$, where the accessible states at time $t$ are $G_0$ through $G_t$ (Figure~\ref{fig:markov_hedge}).
Let $\eta_1$ be the probability that a new modification is approved in this Markov prior.
We show in Section~\ref{sec:fam_proof} of the Supplement that the MarkovHedge algorithm corresponds to solving a sequence of penalized ERM problems with the regularization term
\begin{align}
\mathcal{R}_{\eta_1}(\xi) =
\left( \sum_{j=1}^{|\mathcal{S}_T|} \xi_j \log \xi_j \right ) -
\sum_{j=1}^{|\mathcal{S}_T|}
\xi_j
\left( \ln a_{s_{j,1}} + \sum_{t = 2}^T \ln A^{(t)}_{s_{j,t}, s_{j,t - 1}} \right ).
\label{eq:reg_markovhedge}
\end{align}
We use this same definition for $\mathcal{R}_{\eta_1}$ in \eqref{eq:approval_fam}.
The first summation in \eqref{eq:reg_markovhedge} is entropic regularization.
The second summation is the log probability of the soft approval sequence with respect to the Markov prior and encourages approving modifications at a rate that is highly probable per this prior.

\begin{algorithm}
	\SetAlgoLined
	\DontPrintSemicolon
	Initialize $\tilde{\theta}_{1,0} = a_0$ and $\tilde{\theta}_{1,1} = a_{1}$. \;
	\For{$t = 1,...,T$}{
		\For{$j = 0,...,t$}{
			Set $\tilde{M}_{t,j}$ to the upper confidence bound for the risk of modification $G_j$.\;
			Set 	$
			\hat{\theta}_{t,j} = \frac{
				\tilde{\theta}_{t,j} \exp(-\eta_2 \tilde{M}_{t,j} ) \mathbbm{1}\{ \tilde{M}_{t,j} \le \delta + \tilde{\epsilon} \}
			}{
				\sum_{j' = 0}^{t} \tilde{\theta}_{t,j'} \exp(-\eta_2 \tilde{M}_{t,j'} ) \mathbbm{1}\{ M_{t,j'} \le \delta + \tilde{\epsilon} \}
			}
			$.
			\tcp*{Output optimistic weights}
		}
		Deploy the stochastic selective prediction model $h_{\hat{\theta}_{t}}$ that predicts using the ensemble $\frac{1}{\sum_{j=1}^{t} \hat{\theta}_{t,j}} \sum_{j=1}^{t} \hat{\theta}_{t,j} G_{j}$ with probability $1 - \hat{\theta}_{t,0}$ and abstains with probability $\hat{\theta}_{t,0}$.\;
		Observe monitoring data with empirical distribution $P_{t,n}$.\;
		Query the model developer for a new modification $G_{t + 1}$.\;
		\For{$j = 0,...,t$}{
			Set $
			v_{t,j} = \frac{
				\tilde{\theta}_{t,j} \exp(-\eta_3 \E_{P_{t,n}} \ell_{\delta}(G_{j}(X), Y) )
			}{
				\sum_{j' = 0}^{t} \tilde{\theta}_{t,j'} \exp(- \eta_3 \E_{P_{t,n}} \ell_{\delta}(G_{j'}(X), Y) )
			}
			$.
			\tcp*{Update weights using monitoring data}
		}
		Set $\tilde{\theta}_{t + 1} = A^{(t + 1)} v_{t, j}$. \tcp*{Apply transition matrix from Markov prior}
	}
	\caption{Approval strategy defined in \eqref{eq:approval_fam} with penalty parameters $\eta_1$ and $\eta_2$, learning rate $\eta_3$, and margin $\tilde{\epsilon}$. Let $\mathcal{R}_{\eta_1}$ be the penalty for the MarkovHedge algorithm based on a Markov chain with initial distribution $a$ and transition probability matrices $A^{(2)},\cdots,A^{(T)}$.}
	\label{algo:fam}
\end{algorithm}

In Section~\ref{sec:fam_proof} of the Supplement, we prove that approval statuses from strategy $\boldsymbol{\eta}$ can be computed efficiently using Algorithm~\ref{algo:fam}.
Rather than solving \eqref{eq:approval_fam} directly, we use a variant of the MarkovHedge algorithm  to update the approval status of the proposed modifications.
In particular, Algorithm~\ref{algo:fam} looks one step ahead at each time $t$ and reweights each model $G_j$ based on its estimated risk bound $\tilde{M}_{t,j}$.
Based on Assumption~\ref{assume:drift}, we construct confidence bounds for each modification based on prospective monitoring data collected within the last $W$ time points.
Thus, $\tilde{M}_{t,j}$ bounds the risk of $G_j$ with respect to $P_{\tau_{t,j}:t-1}$, where $\tau_{t,j}$ denotes the start index of this time window.
For the special case where $j = t$, we set $\tau_{t,t} = t-1$ and construct confidence bounds using a training/validation split or cross-validation.

Next, we highlight approval strategies that are special cases within this family.

\paragraph{Abstention only}
The most pessimistic strategy refuses to use any models from the developer and only abstains.
This is an important special case: When distributional shifts are severe, none of the candidate modifications have acceptable performance, and the abstention-only strategy is the safest option.
This corresponds to \eqref{eq:approval_fam} with $\eta_1 = \eta_2 = \eta_3 = 0$.
That is, the Markov chain prior assigns a probability of one to the approval sequence that uniformly abstains over time and zero to all others.
This strategy ignores the empirical loss of the proposed modifications.

\paragraph{Blind approval}
The most optimistic validation strategy approves the latest modification instantaneously.
We approximate this strategy by setting $\eta_1$ close to one and $\eta_2 = \eta_3 = 0$.
This will approve the latest modification at each time step as long as its risk bound satisfies the constraint in \eqref{eq:approval_fam}.

\paragraph{Repeated T-tests against baseline}
If we believe that distributional shifts are rare, an effective approval strategy is to identify modifications that are non-inferior to the abstention option by hypothesis testing and to select the one with the best historical performance.
We approximate non-inferiority testing using the constraint in \eqref{eq:approval_fam} for $\tilde{\epsilon} > 0$ and select the modification with the lowest upper confidence bound by setting $\eta_1 = 0.5$, $\eta_3 = 0$, and $\eta_2$ to some large value (e.g. $> T$).

\paragraph{MarkovHedge}
When the distribution is highly non-stationary, we can use the MarkovHedge algorithm by setting $\eta_1 \in (0,1)$, $\eta_2 = 0$, and $\eta_3 > 0$.
At every time point, this strategy will deploy a selective prediction model with some non-zero abstention rate.

\subsection{Searching over a family of approval strategies}
\label{sec:l2a_ewaf}


L2A optimizes the approval strategy by searching over the family of strategies defined in Section~\ref{sec:family} using the EWAF algorithm.
Suppose we have $m$ candidate approval strategies with hyperparameters $\boldsymbol{\eta}^{(j)}$ for $j = 0,\dots,m - 1$, where $\boldsymbol{\eta}^{(0)}$ is the abstention-only strategy.
Let the approval status at time $t$ from strategy $\boldsymbol{\eta}^{(j)}$ be denoted $\hat{\theta}_{t}^{(j)}$.
L2A searches over the candidate strategies by dynamically assigning each strategy a probability weight, where $w_{t,j}$ is the probability weight for strategy $j$ at time $t$.
Initially, the probability weights for all strategies are equal.
For some meta-learning rate $\lambda > 0$, L2A updates the weight for the $j$th strategy at times $t = 1,\dots,T$ as follows:
\begin{align}
w_{t,j} =
\frac{
w_{t - 1, j} \exp \left  (-\lambda \E_{P_{t - 1,n}} \ell_{\delta} \left (h_{\hat{\theta}_{t - 1}^{(j)}}(X), Y \right ) \right )
}{
\sum_{j' = 0}^{m - 1} w_{t - 1,j'} \exp \left (-\lambda \E_{P_{t - 1,n}} \ell_{\delta}\left (h_{\hat{\theta}_{t-1}^{(j')}}(X), Y \right ) \right )
}.
\label{eq:l2a_weights}
\end{align}
L2A then deploys the selective prediction model $h_{\hat{\theta}_t^{\L2A}}$ where $\hat{\theta}_t^{\L2A} = \sum_{j = 1}^m w_{t,j} \hat{\theta}_t^{(j)}$.
The pseudocode for L2A is given in Algorithm~\ref{algo:l2v}.

\begin{algorithm}
	\SetAlgoLined
	\DontPrintSemicolon
	Initialize the fail-safe approval strategy with hyperparameter $\boldsymbol{\eta}^{(0)} = (0,0,0)$.\;
	Initialize $m - 1$  approval strategies $\boldsymbol{\eta}^{(1)}, \dots,\boldsymbol{\eta}^{(m-1)}$.\;
	Initialize probability weights $w_{1,0} = w_{1,1} = \cdots = w_{1,m-1} = \frac{1}{m}$ \;
	\For{$t = 1,...,T$}{
		Query the model developer for a new modification $G_{t}$.\;
		Query the $m - 1$ strategies for approval statuses $\hat{\theta}_t^{(j)}$ for $j = 1,\dots,m-1$.\;
		Deploy selective prediction model $h_{\hat{\theta}_t^{\L2A}}$ with $\hat{\theta}_t^{\L2A} = \sum_{j = 0}^{m-1} w_{t,j} \hat{\theta}_t^{(j)}$.\;
		Observe monitoring data with empirical distribution $P_{t,n}$.\;
		\For{$j = 0,...,m - 1$}{
			Set $
			w_{t + 1,j} = \frac{
				w_{t,j} \exp \left (-\lambda \E_{P_{t,n}} \ell_{\delta} \left (h_{\hat{\theta}_{t}^{(j)}}(X), Y \right ) \right )
			}{
				\sum_{j' = 0}^{m-1} w_{t,j'} \exp \left (-\lambda \E_{P_{t,n}} \ell_{\delta} \left (h_{\hat{\theta}_{t}^{(j')}}(X), Y \right ) \right )
			}
			$. \tcp*{Update weight for strategy $j$}
		}
	}
	\caption{Learning-to-approve (L2A) for $m$ approval strategies $\boldsymbol{\eta}^{(0)},\cdots, \boldsymbol{\eta}^{(m-1)}$ and learning rate $\lambda > 0$.}
	\label{algo:l2v}
\end{algorithm}

Next, we bound the average risk of L2A.
Our proof, given in Section~\ref{sec:proof_reg} of the Supplement, is based on existing bounds for the EWAF and improves on them in two ways.
First, because L2A has the option to abstain, we can use the abstention cost to tighten the risk bound. 
Second, the distributional shifts are bounded according to Assumption~\ref{assume:drift}, which lets us bound the worst-case risk at each time point with high probability.
\begin{theorem}
	Consider any sequence of distribution shift functions $\{p_t: t = 1,\dots,T \} \in \mathcal{D}_{V, W}$ for some $V > 0$ and window size $W$, and any model development process $\{g_t: t = 1,\dots, T\}$.
	Suppose there exist constants $\alpha_1, \alpha_2, z \ge 0$ such that the following probability bounds are satisfied by the risk bounds $\{M_{t,j}: t=1,\dots, T, j = 1,\dots, t\}$:
	\begin{align}
	& \Pr\left(
	\max_{j = 1,...,t} \E_{\Pp_{\tau_{t,j}:t - 1}} \ell_{\delta} \left(G_{j}(X), Y \right) - \tilde{M}_{t,j} \ge 0
	\right) \le \alpha_1
	\quad \forall t = 1,...,T
	\label{eq:pred_assum1}\\
	& \Pr\left(
	\max_{j = 1,...,t} \E_{\Pp_{\tau_{t,j}:t - 1}} \ell_{\delta} \left(G_{j}(X), Y \right) - \tilde{M}_{t,j} \ge z
	\right)
	\le
	\alpha_2
	\quad \forall t = 1,...,T.
	\label{eq:pred_assum2}
	\end{align}
	Then for any learning rate $\lambda > 0$ and $\tilde{\epsilon} \ge 0$, the expected average risk for L2A satisfies
	\begin{align}
	\hspace{-0.3in}
	\E_{U_1,\dots, U_T}
	\left[
	\frac{1}{T}
	\sum_{t = 1}^T \E_{P_t} \ell_{\delta} \left (h_{\hat{\theta}^{\L2A}_t}(X), Y \right)
	\right ]
	\le
	-\frac{1}{c(\lambda, \tilde{\epsilon}, z)}
	\left(
	\lambda \delta
	+ \frac{1}{T}\ln m
	+ \frac{\lambda^2}{8n}
	\right )
	\label{eq:loss_bound}
	\end{align}
	where
	\begin{align*}
	c(\lambda, \tilde{\epsilon}, z)
	& =
	\frac{
		\exp(-\lambda (\delta+\tilde{\epsilon} + V)) -1
	}{
		\delta+\tilde{\epsilon} + V
	}
	\left(1 - \alpha_1 -  \alpha_2 \right) +
	\frac{
		\exp(-\lambda (\delta +\tilde{\epsilon} + V + z) -1
	}{
		\delta +\tilde{\epsilon}+ V + z
	}
	\alpha_1 + \left(\exp(-\lambda) -1 \right ) \alpha_2.
	\end{align*}
	\label{thrm:regret}
\end{theorem}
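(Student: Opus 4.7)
My plan is to combine three ingredients: (i) convexity of $\ell_\delta$ to reduce the L2A risk to a mixture of the $m$ strategies' risks; (ii) event-dependent upper bounds on those risks from the strategies' feasibility constraint together with Assumption~\ref{assume:drift}; and (iii) the potential-function argument for the exponentially weighted forecaster, sharpened by the exp-concave inequality $\exp(-\lambda R)\le 1+\tfrac{e^{-\lambda U}-1}{U}R$ valid on $R\in[0,U]$. The probabilistic hypotheses \eqref{eq:pred_assum1}--\eqref{eq:pred_assum2} are then used to mix three values of $U$ into the coefficient $c(\lambda,\tilde\epsilon,z)$ that appears in the bound.

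First, because $\ell_\delta(\cdot,y)$ is convex in its first argument and $h_{\hat\theta_t^{\L2A}}$ is a mixture of the $h_{\hat\theta_t^{(j)}}$ with weights $w_{t,j}$, Jensen's inequality yields $\E_{P_t}\ell_\delta(h_{\hat\theta_t^{\L2A}}(X),Y)\le \sum_j w_{t,j}\,R_{t,j}=:\bar R_t$, where $R_{t,j}:=\E_{P_t}\ell_\delta(h_{\hat\theta_t^{(j)}}(X),Y)$. It therefore suffices to bound $\tfrac{1}{T}\sum_t\E[\bar R_t]$. Since every strategy only places positive weight on modifications whose UCB satisfies $\tilde M_{t,k}\le\delta+\tilde\epsilon$, combining the feasibility constraint with the MMD window bound $V$ and \eqref{eq:pred_assum1}--\eqref{eq:pred_assum2} yields, at each time $t$, three $\tilde{\mathcal{F}}_t$-measurable events: $B_1$ with $\Pr(B_1)\ge 1-\alpha_1$ on which $\bar R_t\le U_1:=\delta+\tilde\epsilon+V$; $B_2\supseteq B_1$ with $\Pr(B_2)\ge 1-\alpha_2$ on which $\bar R_t\le U_2:=U_1+z$; and unconditionally $\bar R_t\le U_3:=1$. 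These three ranges generate the three summands of $c$ once they are fed into the exp-concave inequality with $U=U_i$.

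The engine of the proof is the standard EWAF potential analysis on the unnormalized weights $\tilde W_t=\sum_j\tilde w_{t,j}$. Because strategy~$0$ always abstains, its empirical loss equals $\delta$ at every round, which gives the pointwise lower bound $\ln(\tilde W_{T+1}/\tilde W_1)\ge -\lambda T\delta-\ln m$. For the matching upper bound I work conditionally on $\tilde{\mathcal{F}}_t$: Hoeffding's lemma applied to the $n$-sample empirical risk gives $\E[\exp(-\lambda L_{t,j})\mid\tilde{\mathcal{F}}_t]\le \exp(-\lambda R_{t,j}+\lambda^2/(8n))$---this produces the $\lambda^2/(8n)$ in place of the classical EWAF's $\lambda^2/8$---after which the exp-concave inequality with $U=U_t$ (the tightest admissible range on the current event) gives $\sum_j w_{t,j}\exp(-\lambda R_{t,j})\le 1+c_t\bar R_t$, where $c_t:=(e^{-\lambda U_t}-1)/U_t$. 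Chaining $\ln(1+x)\le x$ with conditional Jensen to push $\ln$ inside the conditional expectation, summing over $t$ and taking total expectations produces $\E[\ln(\tilde W_{T+1}/\tilde W_1)]\le T\lambda^2/(8n)+\sum_t\E[c_t\bar R_t]$, which combined with the lower bound yields $\sum_t\E[-c_t\bar R_t]\le\lambda T\delta+\ln m+T\lambda^2/(8n)$.

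The step I expect to be most delicate, and the only one that genuinely uses $\alpha_1,\alpha_2$ rather than just the deterministic structure of the algorithm, is the final conversion of $\sum_t\E[-c_t\bar R_t]$ into $(-c)\sum_t\E[\bar R_t]$. Because $c_t$ and $\bar R_t$ are both driven by which event $B_i$ occurred at time $t$, a pointwise comparison $-c_t\bar R_t\ge -c\bar R_t$ does not hold uniformly, so I plan to decompose $\E[-c_t\bar R_t]$ into the contributions from $B_1$, $B_2\setminus B_1$, and $B_2^c$, use the event-wise upper bound $\bar R_t\le U_i$ on each summand together with the probability controls $\Pr(B_1^c)\le\alpha_1$ and $\Pr(B_2^c)\le\alpha_2$, and reassemble the three scaled coefficients $(e^{-\lambda U_i}-1)/U_i$ into the convex combination that defines $c(\lambda,\tilde\epsilon,z)$. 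Once this conversion is in hand, dividing by $T$ and by $-c>0$ (each summand of $c$ being negative) gives \eqref{eq:loss_bound}.
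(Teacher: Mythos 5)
Your proposal follows essentially the same route as the paper's proof: the EWAF potential argument with the abstention-only strategy supplying the lower bound $-\lambda\delta T-\ln m$, Hoeffding's lemma on the empirical-versus-population gap supplying the $\lambda^2/(8n)$ term, the Bernstein/exp-concave inequality $\ln\E[e^{sX}]\le(e^s-1)\E X$ applied with the three event-dependent ranges $\delta+\tilde\epsilon+V$, $\delta+\tilde\epsilon+V+z$, and $1$ to produce $c(\lambda,\tilde\epsilon,z)$, and convexity of $\ell_\delta$ to tie the mixture risk back to the L2A risk. The event-decomposition step you flag as delicate is handled in the paper at essentially the same level of detail you describe, so there is no substantive difference between the two arguments.
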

\noindent
The risk bound is small if the risk bounds $\tilde{M}_t$ satisfy \eqref{eq:pred_assum1} and \eqref{eq:pred_assum2} with small values of $\alpha_1$ and $\alpha_2$.
We can do this by setting $\tilde{M}_{t,j}$ to the upper limit of the $(1 - \alpha_1/t)$-confidence interval of $\E_{\Pp_{\tau_{t,j}:t - 1}} \ell_{\delta} \left(G_{j}(X), Y \right)$, where we have adjusted for multiplicity using a Bonferroni correction.
To derive $\alpha_2$, we use Hoeffding's inequality.
Assuming that the confidence bound for $G_t$  at time $t$ is constructed using a validation set of size $n'$, then \eqref{eq:pred_assum2} is satisfied with $\alpha_2 = (T - 1) \exp(-8z^2n) + \exp(-8z^2n')$.
Because $z$ is only a variable that appears in the proof, we optimize its value to minimize the risk bound.

Using Theorem~\ref{thrm:regret}, we can choose the hyperparameters in L2A such that the average risk is controlled.
Our bound shares the same form as the classical EWAF bound of $\frac{1}{1 - \exp(-\lambda)}(\lambda \delta + \frac{1}{T} \ln m)$ (see e.g. Theorem~2.4 of \citet{cesa2006prediction}) and converges at the same rate with respect to $T$.
The difference is in the constants: our bound is scaled by $-1/c(\lambda, \tilde{\epsilon}, z)$ and the latter is scaled by $\frac{1}{1 - \exp(-\lambda)}$.
To understand the impact of this difference, we plot the two bounds for various abstention costs  and meta-learning rates in Figure~\ref{fig:regret_to_baseline}.
We set $m = 10$, $T = 50$, and, for the sake of comparison, $n = \infty$.
We assume the maximum drift is $V = 2\delta$.
We find that the curvatures of our risk bounds are significantly flatter and allow for much larger learning rates.
For example, for abstention cost $\delta = 0.15$, the minimum attainable risk bound from the classical result is $2\delta$ and corresponds to using a learning rate of $\lambda = 0.70$.
We can achieve this same risk bound using Theorem~\ref{thrm:regret} with a learning rate that is nearly three times as large, which translates to a much faster approval rate for beneficial modifications.

\begin{figure}
	\centering
	\includegraphics[width=0.4\linewidth]{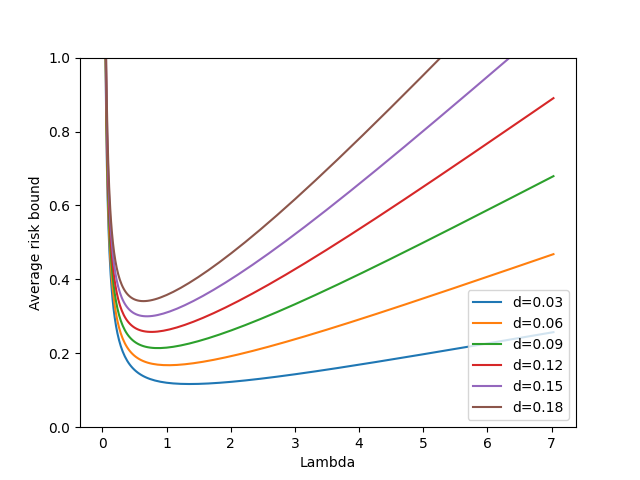}
	\includegraphics[width=0.4\linewidth]{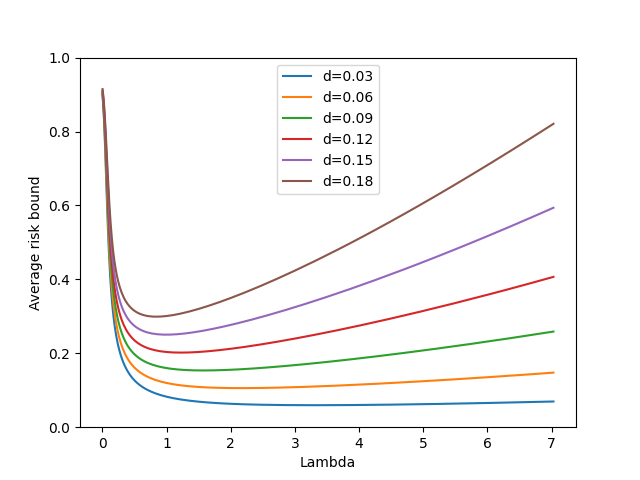}
	\caption{
		Comparison between risk bounds for $m = 10$ and $T = 50$ from Theorem~2.4 in \citet{cesa2006prediction} (left) and Theorem~\ref{thrm:regret} (right).
		The latter bound was derived under the assumption that the maximum distributional shift in terms of the maximum mean discrepancy is $V = 2\delta$.
		Each curve corresponds to a different $\delta$ value and traces out the upper bound for different $\lambda$ values.
	}
	\label{fig:regret_to_baseline}
\end{figure}


\section{Simulation study}
\label{sec:simulations}

We now perform simulation studies to evaluate L2A's ability to control and minimize the cumulative risk.
We evaluate four fixed approval strategies: abstention-only, blind approval, the MarkovHedge, and repeated T-tests.
We implemented two versions of L2A, one that only searches over the four aforementioned strategies (L2A-4) and another that searches over 12 hyperparameter settings (L2A-12).
Additional simulation details are included in Section~\ref{sec:sim_details} in the Supplement.

In the simulations below, the prediction task is binary classification, the number of time steps is $T = 50$, and the sample size at each time point is $n = 75$.
We evaluate models using the hinge loss, a convex relaxation of the zero-one loss.
At each time point, the model developer refits logistic regression on the monitoring data.
We set the abstention cost $\delta$ to the risk of the initially-approved model $G_1$.
We consider four settings with different distributions shift patterns, which all satisfy Assumption~\ref{assume:drift} for $V = \delta$ and window size $W = 3$:
\begin{enumerate}
	\item \texttt{AdaptiveShifts}:
	We simulate adaptive shifts that are targeted against the T-test approval strategy.
	In particular, the distribution is IID for a few time points and shifts whenever the T-test approves a new modification.
	In this way, the newly approved modification will perform worse than abstaining.
	The model is refit on data from the last four time points.
	\item \texttt{SmallFrequentShifts}:
	In many settings, distributional shifts are small but frequent.
	This simulation introduces small, random perturbations to the data generating mechanism every four time points.
	We simulate a model developer who tries to account for these shifts by varying the amount of data used to refit the data.
	Here, the model developer cycles between training on data from the past $j$ times points for $j = 1,\dots, 5$.
	\item \texttt{IID+GoodModels}:
	This simulation reflects the ideal problem-setting where the target population is constant over time and the proposed modifications are improving over time.
	We do this by refitting the model on all monitoring data up to the current time point.
	\item \texttt{IID+RandomModels}:
	Here the data is IID but the model development process is unreliable and most modifications are deleterious.
	To simulate this, we implement a model developer who usually refits the model only using data from the last two batches and, every four time points, refits the model on all available monitoring data.
\end{enumerate}
The full list of candidate approval strategies used in L2A-12 are listed in Table~\ref{table:l2a_candidates} of the Supplement.
We chose a wide range of candidate strategies that includes those highlighted in Section~\ref{sec:family}.
At each time point, the approval strategies constructed upper confidence bounds such that \eqref{eq:pred_assum2} was satisfied with $\alpha_1 = 0.1$.
We set the non-inferiority margin $\epsilon$ to $0.6\delta$, $\tilde{\epsilon} = 0.2\epsilon$, and used the maximum allowable learning rate according to Theorem~\ref{thrm:regret}.
$\lambda$ was set to 1.6 on average using this procedure.


\begin{figure}
	\centering
	\includegraphics[width=0.4\linewidth]{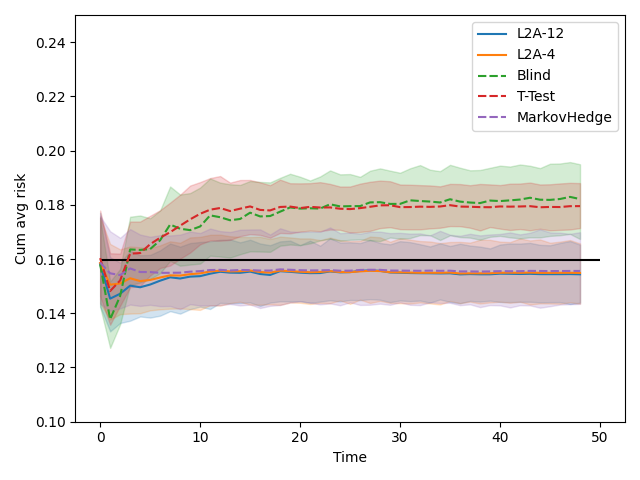}
	\includegraphics[width=0.4\linewidth]{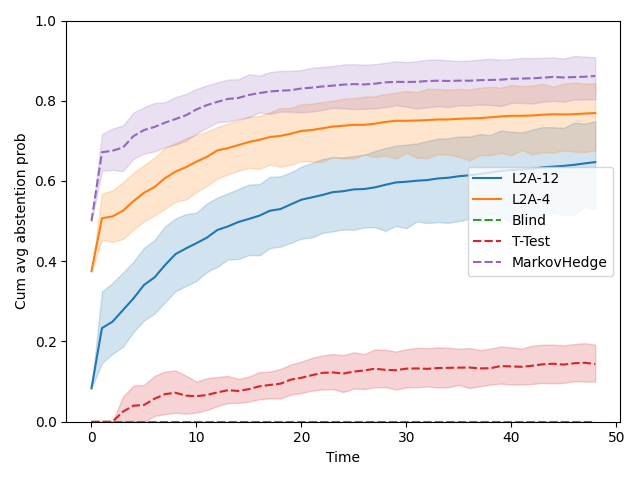}
	\\(a) \texttt{AdaptiveShifts}: Distribution shifts whenever the T-test approves a new modification.
	
	\includegraphics[width=0.4\linewidth]{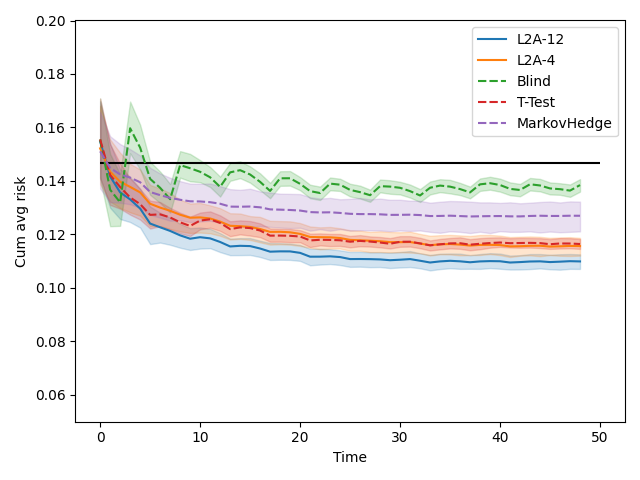}
	\includegraphics[width=0.4\linewidth]{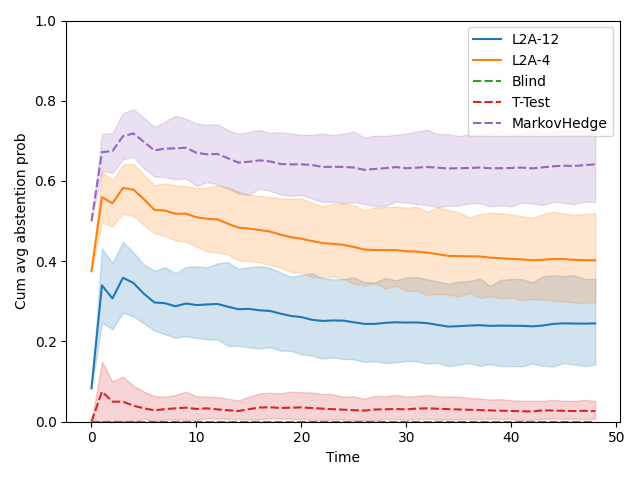}
	\\(b) \texttt{SmallFrequentShifts}: Distribution frequently shifts by a small amount.
	
	\includegraphics[width=0.4\linewidth]{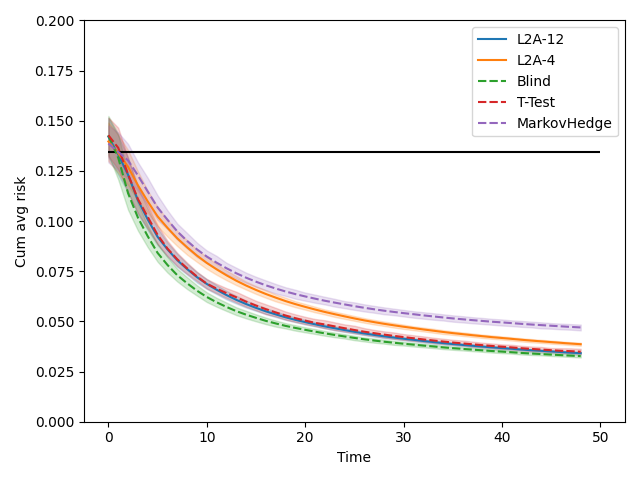}
	\includegraphics[width=0.4\linewidth]{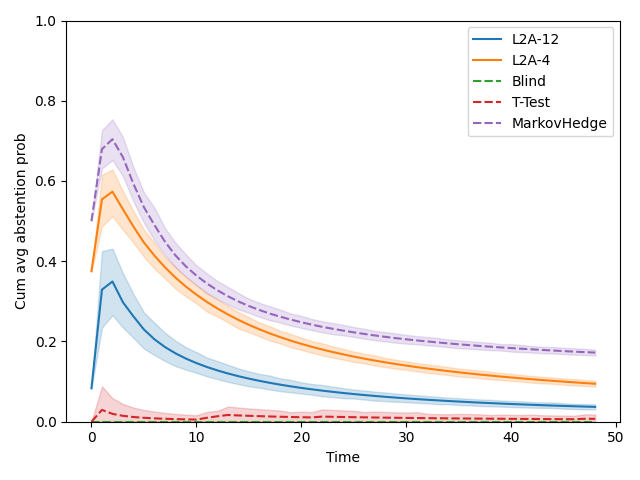}
	\\(c) \texttt{IID+GoodModels}: Data is IID. Most modifications are beneficial.
	
	\includegraphics[width=0.4\linewidth]{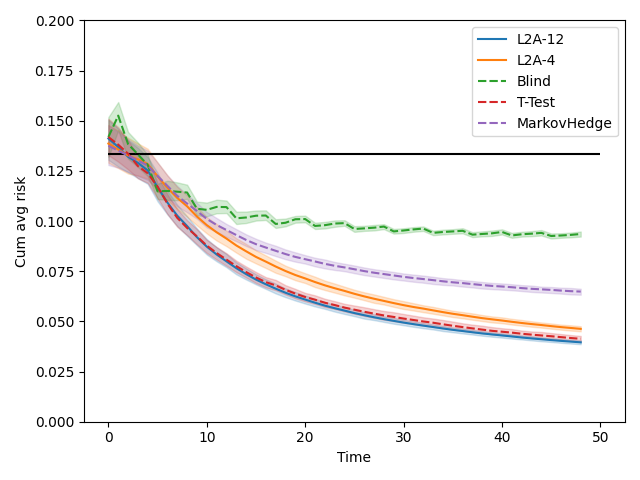}
	\includegraphics[width=0.4\linewidth]{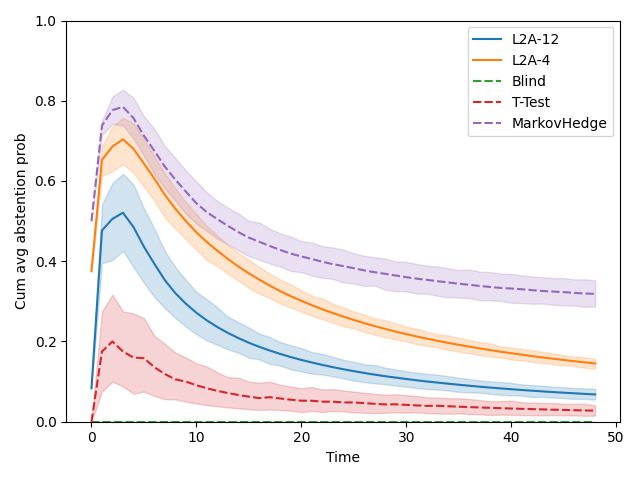}
	\\(d) \texttt{IID+RandomModels}: Data is IID. Only a few modifications are beneficial.
	
	\caption{
		Simulations comparing the cumulative average risk (left) and abstention rate (right) for different approval policies.
		The horizontal black line corresponds to the cost of abstaining.
		The solid curves correspond to the Learning-to-Approve algorithm with 4 and 12 approval strategies, which are labeled as L2A-4 and L2A-12, respectively.
		Dashed curves are fixed approval strategies: blind approval, repeated T-tests, and MarkovHedge.
		The distribution shifts over time in (a) and (b) and is IID in (c) and (d).
	}
	\label{fig:sims}
\end{figure}

Figure~\ref{fig:sims} shows the cumulative average risks and abstention rates for the pACPs.
Notably, the ranking among the four fixed approval strategies varies across the different settings.
For example, the T-test performed well in IID settings but poorly in \texttt{AdaptiveShifts}, and the MarkovHedge performed well in \texttt{AdaptiveShifts} but poorly in IID settings.
Only L2A-12 performed as good as or even better than the best fixed approval strategy across all simulation studies.

L2A-12 achieved a significantly lower risk than all the other fixed strategies in \texttt{SmallFrequentShifts}.
This is expected, because none of the four fixed strategies (or their mixtures) are optimal in this setting.
By searching over a wider range of approval strategies, L2A-12 was able to identify a better strategy.
In particular, L2A-12 steadily converged towards the hyperparameter setting of $\boldsymbol{\eta} = (0.5, 100,10)$ over time.

We also investigate how the abstention rates of the different approval strategies vary over time.
L2A increased the abstention rate in \texttt{AdaptiveShifts}, used a steady rate in \texttt{SmallFrequentShifts}, and decreased the abstention rate in the two IID simulations.
This is the desired behavior, since the abstention option is effective for defending against distributional shifts and the proposed modifications have high predictive accuracy in stationary settings.
While the MarkovHedge also learned to adjust the abstention rates over time, it learned much slower, which led to higher average risks.


\section{Experiments on real-world datasets}
\label{sec:real_data}


In this section, we evaluate how L2A performs in the presence of real-world distributional shifts.
Of course, the FDA has not yet approved continuously evolving AI/ML-based SaMDs, which means that monitoring data for these algorithms are not available.
As such, we have chosen to evaluate L2A on the Medical Information Mart for Intensive Care (MIMIC)-IV \citep{Johnson2020-iq} and Yelp (\url{http://www.yelp.com/dataset}) datasets, because (1) they span a time period of at least ten years and (2) every observation is associated with a timestamp.
For the MIMIC-IV dataset, the data has been deidentified and the timestamps are randomly shifted within a one-year time window, which will dampen distributional shifts that may have occurred in this data.
We supplement this analysis with the Yelp dataset, where the exact timestamp of each observation is available.
By analyzing both datasets, we can do a more comprehensive evaluation of L2A.
Additional details for these analyses are given in Sections~\ref{sec:yelp} and \ref{sec:mimic} of the Supplement.

\subsection{MIMIC data}
MIMIC-IV contains deidentified electronic health records for over 40,000 patients admitted to intensive care units at the Beth Israel Deaconess Medical Center between 2009 and 2019.
The dates are randomly shifted to preserve patient privacy, but the approximate year is provided to let researchers study time trends in the data.
Here we consider the benchmark task of predicting in-hospital mortality \citep{Harutyunyan2019-xs} using physiological signals, such as heart rate, pulse oximetry, and arterial blood pressure.
We simulate a model developer who trains a random forest classifier and updates it on a quarterly basis by retraining on data from the past two years.
We use the hinge loss to evaluate the models and set the abstention cost to $\delta = 0.1$.

The performance drift in the MIMIC dataset is slow, which is likely due to the robustness of our extracted features (see e.g. \citet{Nestor2019-mh}) as well as the random shifting of timestamps.
In particular, the empirical risk of the initial model increases from 0.08 to 0.1 over the ten-year time period, exceeding the abstention cost at a number of time points.
Because the performance drift in this dataset is slow, historical data can be used to retrain the ML algorithm and improve performance over time.

As shown in Figure~\ref{fig:real} (top), if we blindly approve the retrained model at every time point, the cumulative average risk drops to 0.075.
Of course, this strategy is too risky to deploy in practice.
The next best pACPs were the repeated T-tests, L2A-4, and L2A-12, which were able to keep the average risk around 0.08.
Analyzing the weights in L2A, we found that L2A steadily converged towards the blind approval strategy.
Thus, we were able to safely incorporate unsafe strategies by wrapping them within L2A.
Again, we note that the MarkovHedge algorithm performed poorly on this dataset because it relied too heavily on the abstention option.

\begin{figure}
	\centering
	\includegraphics[width=0.4\textwidth]{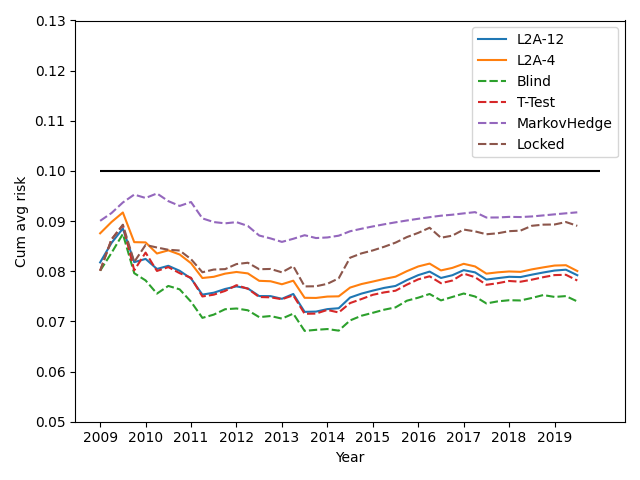}
	\includegraphics[width=0.4\textwidth]{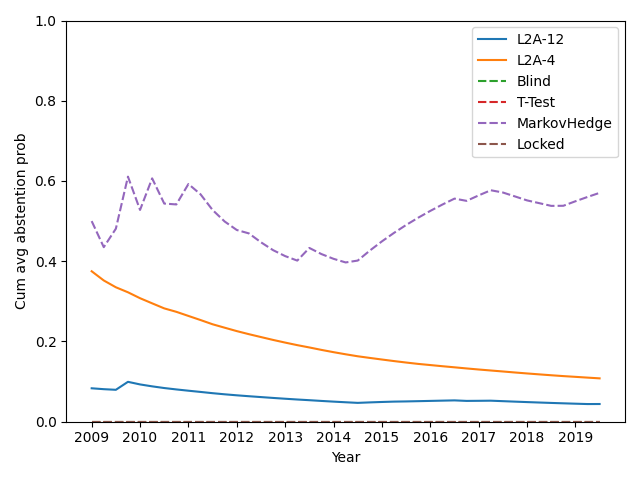}
	\includegraphics[width=0.4\textwidth]{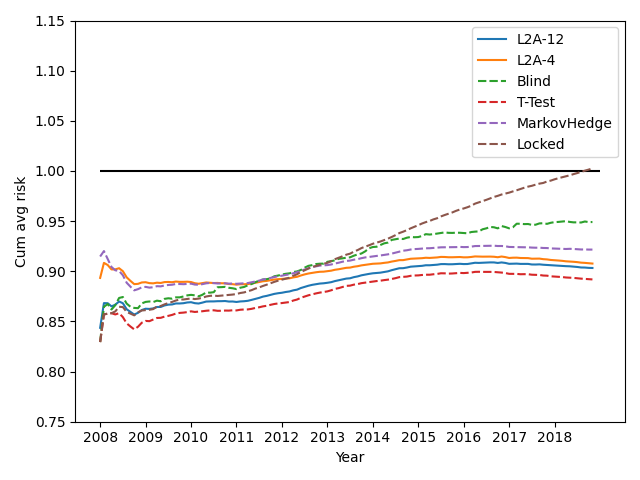}
	\includegraphics[width=0.4\textwidth]{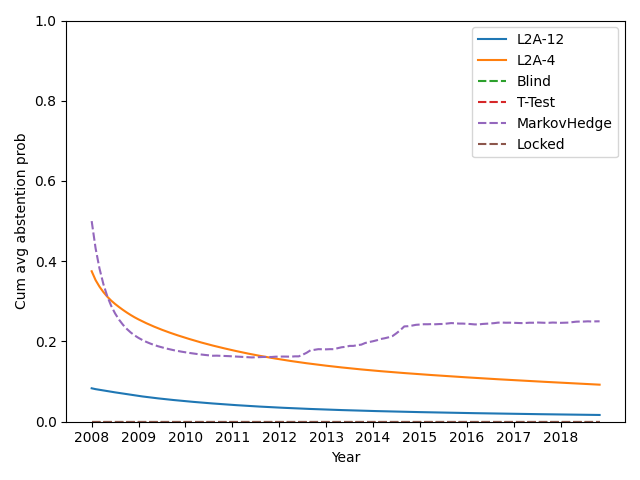}
	\label{fig:real}
	\caption{
		The cumulative average risk (left) and probability of abstaining (right) for different approval policies on the MIMIC-IV dataset (top) and the Yelp dataset (bottom).
		The ``Locked'' policy keeps the initially-approved algorithm locked.
	}
\end{figure}

\subsection{Yelp data}
The Yelp dataset contains millions of reviews left by users from 2008 to 2018, where the date and time of each review is known.
We consider the task of predicting reviewer ratings -- a number between 1 and 5 -- from the review text.
While this task is artificial, it shares similarities to many natural language processing tasks that arise in healthcare, such as electronic phenotyping \citep{Liao2015-ex, Kirby2016-zm, Zhang2019-va}.
Here, we simulate a model developer who proposes a new deep learning model every month by training on data from the prior month.
We sample 2000 reviews each month to serve as monitoring data.
We define the loss using the absolute difference and set the abstention cost $\delta$ to $1$.

There is clear evidence of performance drift in the Yelp dataset.
If we train the model only on data from January 2008 and keep it locked, its empirical risk steadily increases from 0.85 to 1.15.
As shown in Figure~\ref{fig:real} (bottom), the best approval strategy in our Yelp experiment is to repeatedly apply the T-test.
However, the T-test approval strategy is not guaranteed to perform well in the presence of distributional shifts.
By wrapping the T-test strategy within L2A, we can control the cumulative risk and recover similar operating characteristics.
Indeed, we find that L2A-12 and L2A-4 perform almost as well as the T-test strategy in this dataset.
While the fixed MarkovHedge algorithm also provides error-rate control, it is designed for adversarial setups and is therefore overly conservative in approving modifications.

\section{Discussion}

In this paper, we explored policies for evaluating and approving modifications to AI/ML-based SaMDs in the presence of adaptive but bounded distributional shifts.
Our motivation for studying adaptive distributional shifts is based on concerns that ML algorithms can affect their environments in unexpected and sometimes undesirable ways.
While shifts are unlikely to be adversarial in practice, it is important to understand if it is still possible to design approval policies with meaningful error-rate control under such weak assumptions.
Because it is difficult to anticipate which approval strategy is most appropriate for a given problem-setting, we investigated an approach that learns how to approve algorithmic updates.
We found that this learning-to-approve (L2A) approach was able to control the online error-rate without significantly sacrificing the rate at which beneficial modifications are approved.

The key reason L2A is able to control the error-rate is that it can abstain for some fixed cost $\delta$.
Without the abstention option, none of the pACPs considered in this paper are necessarily safe in the presence of distributional shifts.
In fact, keeping the initial algorithm locked is not safe either because its performance can drift over time.
In practice, one could handle abstentions by referring patients to human experts or ordering additional medical tests.

The problem of designing safe and effective pACPs for real-world settings is highly complex.
We have studied the problem of distributional shifts within a simplified framework.
Future works will need to extend our setup to more realistic settings.
For instance, this work controls a univariate performance metric, but multiple metrics need to be tracked in practice (e.g. sensitivity and specificity, or performance across different subpopulations).
In addition, we have assumed the monitoring data is representative of the target population and contains gold-standard labels.
Obtaining such ideal data is not easy to do, and this sampling problem needs to be studied in much more detail.

%

\section*{Acknowledgments}
This work was greatly improved by helpful suggestions and feedback from Alexej Gossmann, Berkman Sahiner, Brian Williamson, Noah Simon, Pang Wei Koh, and Romain Pirracchio.

\bibliographystyle{unsrtnat}
\bibliography{main}

\appendix

\begin{table}
	\centering
	\small
	\begin{tabular}{c|c}
		Notation & Description\\
		\toprule
		$T$ & The total number of time points\\
		$\mathcal{F}_t$ & Sigma algebra representing all historical data up to time $t$ \\
		$\tilde{\mathcal{F}}_t$ & Sigma algebra representing all historical data up to time $t - 1$ and proposed model at time $t$ \\
		$p_t$ & A $\mathcal{F}_{t - 1}$-measurable function that outputs the distribution at time $t$ \\
		$P_t$ & The realized output from $p_t$, the distribution at time $t$ \\
		$P_{t_1:t_2}$ & A uniform mixture of realized distributions from time $t_1$ to $t_2$, inclusive\\
		$n$ & Number of labeled observations drawn IID at each time point to serve as monitoring data\\
		$P_{t,n}$ & Empirical distribution of monitoring data at time $t$\\
		$g_t$ & A $\mathcal{F}_{t - 1}$-measurable  function that outputs a prediction model at time $t$ \\
		$G_t$ & The realized prediction model at time $t$, the output of $g_t$ \\
		$G_0$ & The selective prediction model that always abstains\\
		$\ell$ & Loss function for evaluating prediction models, assumed to be convex \\
		$\delta$ & Cost for abstaining\\
		$\ell_{\delta}$ & Augmented loss function for evaluating selection prediction models for abstention cost $\delta$\\
		$V$ & Maximum drift in the maximum mean discrepancy, used in Assumption~\ref{assume:drift}\\
		$W$ & Window size to average over, used in Assumption~\ref{assume:drift}\\
		$\theta_t$ & A $\tilde{\mathcal{F}}_{t}$-measurable function that maps to an approval status at time $t$ \\
		$\hat{\theta}_t^{\L2A}$ & The realized approval status at time $t$ from L2A\\
		$h_{\hat{\theta}_t}$ & Selective prediction model for approval status $\hat{\theta}_t$\\
		$\Xi_T$ & All possible soft approval sequences \\
		$\eta_1$ & Hyperparameter that characterizes our optimism in modifications proposed by the model developer\\
		$\mathcal{R}_{\eta_1}$ & Function that regularizas the speed at which modifications are approved\\
		$\eta_2$ & Hyperparameter that characterizes our prior belief that the distribution is stationary over time\\
		$M_t$ & Predicted risks at time $t$ for all hard approval sequences, calculated using the IID assumption\\
		$\eta_3$ & Learning rate used in an approval strategy\\
		$m$ & Number of approval strategies that L2A searches over\\
		$\boldsymbol{\eta}^{(j)}$ & Approval strategy hyperparameters considered by L2A\\
	\end{tabular}
	\vspace{0.2in}
	\caption{Summary of notation}
	\label{table:notation}
\end{table}

\section{A family of approval strategy}
\label{sec:approv_fam_deets}

The vector $M_t \in \mathbb{R}^{|\mathcal{S}_T|}$ contains the predicted risks at time $t$ for all hard approval sequences, where $M_{t,j}$ is the predicted risk of candidate modification $s_{j,t}$.
At time $t - 1$, we predict the risks as follows.
Let $\tilde{M}_{t,j}$ be the predicted risk of candidate modification $k$ at time $t$ for $k = 0,...,t$.
We set $\tilde{M}_{t,0} = \delta$ for all $t$, as the cost of abstention is known.
For $k = 0,...,t$, we set $\tilde{M}_{t,k}$ to the lower bound of the Bonferroni-corrected $(1 - \alpha_1)$-confidence interval for its expected risk for the mixture of distributions $P_{\tau_{t,k}: t - 1}$.
For $k \le t - 1$, we set $\tau_{t,k} = \max(k, t - W)$ and we construct the confidence interval using prospectively collected monitoring data.
For $k = t$, we set $\tau_{t,t} = t$ and construct the confidence interval using either a training-validation split or cross-validation.

\section{Proof for Algorithm~\ref{algo:fam}}
\label{sec:fam_proof}

In this section, we prove that Algorithm~\ref{algo:fam} outputs the approval status for \eqref{eq:approval_fam} for times $t = 1,...,T$.
We will need to introduce some notation and prove a number of lemmas before arriving at this result.

For notational convenience, let the vector $\hat{L}_{t,n}$ represent the empirical risk incurred at time $t$ for hard approval sequences in $\mathcal{S}_T$.
That is, define
\begin{align}
\hat{L}_{t,n,j} = \E_{P_{t,n}} \ell_{\delta}(G_{s_{j,t}}(X), Y) \quad \forall j = 1,...,|\mathcal{S}_T|.
\end{align}
Also, let the empirical risk incurred at time $t$ by candidate modification $G_j$ be denoted
\begin{align}
\tilde{L}_{t,n,j} = \E_{P_{t,n}} \ell_{\delta}(G_{j}(X), Y) \quad \forall j = 1,...,t.
\end{align}

Before proceeding, recall that the solution to a constrained convex optimization problem is the Bregman projection of the solution to the unconstrained optimization problem, with respect to the objective function, onto the feasible set \citep{Rakhlin2009-sx}.
More formally, it is stated as follows:
\begin{lemma}
	Suppose $\mathcal{R}: \mathbb{R}^d \mapsto \mathbb{R}$ is a strictly-convex differentiable function.
	Define convex loss functions $l_t: \mathbb{R}^d \mapsto \mathbb{R}$ for $t = 1,..,T$.
	For $w\in \mathbb{R}^d$, let $\Phi_0(w) = \mathcal{R}(w)$ and $\Phi_t(w) = \Phi_{t - 1}(w) + \eta l_t(w)$.
	Let $\mathcal{K} \subseteq \mathbb{R}^d$.
	For any $w' \in \mathbb{R}^d$, define $D_{\Phi_t}$ as the Bregman divergence with respect to $\Phi_t$ and the Bregman projection as
	\begin{align*}
	\Pi_{\Phi_t, \mathcal{K}}(w') = \arg\min_{w \in \mathcal{K}} D_{\Phi_t}(w, w').
	\end{align*}
	Then
	\begin{align*}
	\Pi_{\Phi_t, \mathcal{K}} \left (
	\min_{w \in \mathbb{R}^d} \left[\eta \sum_{s = 1}^t \ell_s(w) + \mathcal{R}(w)\right]
	\right )
	=
	\min_{w \in \mathcal{K}} \left[\eta \sum_{s = 1}^t \ell_s(w) + \mathcal{R}(w)\right].
	\end{align*}
	\label{lemma:bregman}
\end{lemma}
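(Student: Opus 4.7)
The plan is to reduce the Bregman divergence $D_{\Phi_t}(w, w^\star)$ to $\Phi_t(w)$ up to an additive constant, where $w^\star = \arg\min_{w \in \mathbb{R}^d} \Phi_t(w)$ is the unconstrained FTRL iterate. Once that reduction is established, minimizing either quantity over $\mathcal{K}$ yields the same argmin, which is exactly the claim (interpreting the outer $\min$ on each side as $\arg\min$, since a Bregman projection is applied to a point, not a scalar).

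First, I would verify that $\Phi_t$ inherits strict convexity and differentiability from $\mathcal{R}$. Strict convexity follows because $\Phi_t = \mathcal{R} + \eta \sum_{s=1}^{t} \ell_s$ is a sum of a strictly convex function and convex functions; differentiability is needed only at the unconstrained minimizer, which is implicitly assumed by the Bregman-projection formulation. This ensures the first-order optimality condition $\nabla \Phi_t(w^\star) = 0$ holds.

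Next, I would expand the Bregman divergence by definition,
\begin{equation*}
D_{\Phi_t}(w, w^\star) = \Phi_t(w) - \Phi_t(w^\star) - \langle \nabla \Phi_t(w^\star),\, w - w^\star \rangle,
\end{equation*}
and substitute $\nabla \Phi_t(w^\star) = 0$ to obtain $D_{\Phi_t}(w, w^\star) = \Phi_t(w) - \Phi_t(w^\star)$. Since $\Phi_t(w^\star)$ is constant in $w$, we conclude
\begin{equation*}
\arg\min_{w \in \mathcal{K}} D_{\Phi_t}(w, w^\star) = \arg\min_{w \in \mathcal{K}} \Phi_t(w) = \arg\min_{w \in \mathcal{K}} \left[ \eta \sum_{s=1}^{t} \ell_s(w) + \mathcal{R}(w) \right],
\end{equation*}
which is precisely the Bregman projection of $w^\star$ onto $\mathcal{K}$.

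There is no real obstacle here beyond the interpretive one: the lemma as written uses $\min$ where $\arg\min$ is intended, and applies $\Pi_{\Phi_t, \mathcal{K}}(\cdot)$ to what must be a point in $\mathbb{R}^d$. Once that is settled, the proof is a two-line consequence of the vanishing gradient at the unconstrained minimizer. The only subtlety to flag is that strict convexity of $\mathcal{R}$ is what guarantees uniqueness of $w^\star$ (and hence well-definedness of the projection), and that $\Phi_t$ must be assumed differentiable at $w^\star$ for the Bregman divergence to be defined in the usual sense.
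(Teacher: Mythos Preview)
Your proof is correct. The paper does not actually prove this lemma; it states it as a recalled fact with a citation to \citet{Rakhlin2009-sx} and then uses it as a tool in the subsequent lemmas. Your argument---vanishing gradient at the unconstrained minimizer $w^\star$ collapses $D_{\Phi_t}(w,w^\star)$ to $\Phi_t(w)-\Phi_t(w^\star)$, so the constrained argmins coincide---is the standard derivation and is exactly what the citation is pointing to. Your remark that the lemma's $\min$ should be read as $\arg\min$ is also correct and worth noting.
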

Using the notation in Lemma~\ref{lemma:bregman}, \eqref{eq:approval_fam} is a constrained minimization problem where $\mathcal{R}$ is $\mathcal{R}_{\eta_1}$, $\mathcal{K}$ is the probability simplex $\Xi_T$ with the $j$th entry set to zero if $M_{t,j} > \delta + \tilde{\epsilon}$, and the loss $\ell_t$ is $\E_{\Pp_{t, n}} \ell_{\delta}$.
In this case, we can calculate the Bregman projection by zeroing all entries in the unconstrained solution that are constrained to be zero and normalizing the values such that they sum to one.
For notational ease, we will use the ``is-proportional-to'' notation, $\propto$, to represent this normalization step.

We begin by showing that the original MarkovHedge algorithm (also known as the FixedShare algorithm for growing experts) \citep{Vovk1999-sm, Shalizi2011-gh, Mourtada2017-dr} can be written as the solution to a sequence of penalized empirical risk minimization (ERM) problems.
Recall that the approval status at time $t$ associated with the solution $\hat{\xi}$ is  $\sum_{s_j \in \mathcal{S}_T} \hat{\xi}_{j} \mathbbm{1}\{s_{j,t} = k \}$.
\begin{lemma}
	Define a Markov chain prior over all hard approval sequences, where the state at each time point can take on values $\{0,...,T\}$.
	Let the Markov chain have initial distribution $a \in \Delta^{T + 1}$ and transition probabilities $\{A^{(t)}: t = 2,...,T\}$, where the accessible states at time $t$ are $G_0,\dots,G_t$.
	Let $\hat{\xi}^{(t)}$ be the solution to
	\begin{align}
	\min_{\xi \in \Xi_T} &
	\ \eta
	\sum_{t' = 1}^{t - 1} \xi^\top \hat{L}_{t,n}
	+ \mathcal{R}_{\eta_1}(\xi)
	\label{eq:markov_hedge}
	\end{align}
	where $\mathcal{R}_{\eta_1}(\xi) = \sum_{j = 1}^{|\mathcal{S}_T|} \xi_j \ln \xi_j - \xi_j \left (\ln a_{s_{j,1}} +   \sum_{t = 2}^T \ln A^{(t)}_{s_{j,t}, s_{j,t - 1}} \right )$.
	The probability weights from the MarkovHedge algorithm using this Markov chain prior correspond to the approval statuses from solving \eqref{eq:approval_fam} at times $t = 1,...,T$.
	\label{lemma:markov_hedge}
\end{lemma}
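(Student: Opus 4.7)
The plan is to derive a closed-form expression for the solution to \eqref{eq:markov_hedge} by recognizing its regularizer as a Kullback--Leibler divergence to the Markov-chain prior, and then to compute the resulting time-$t$ marginal via the forward algorithm for hidden Markov models, term-by-term matching it to the MarkovHedge update.

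First I would rewrite the regularizer. Setting $\pi_j := a_{s_{j,1}} \prod_{\tau=2}^T A^{(\tau)}_{s_{j,\tau},s_{j,\tau-1}}$ gives $\mathcal{R}_{\eta_1}(\xi) = \sum_j \xi_j \ln(\xi_j/\pi_j)$, which up to an additive constant that does not affect the minimizer is $\mathrm{KL}(\xi\,\|\,\pi)$. Since the empirical-loss term is linear in $\xi$ and $\mathrm{KL}(\cdot\,\|\,\pi)$ is strictly convex on the simplex, a standard Lagrangian calculation over the single constraint $\sum_j \xi_j = 1$ yields the unique minimizer
\begin{equation*}
\hat{\xi}^{(t)}_j \;=\; \frac{1}{Z_t}\, \pi_j \exp\!\left(-\eta \sum_{t'=1}^{t-1} \hat{L}_{t',n,j}\right),
\end{equation*}
where $Z_t$ is the normalizing constant.

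Next I would compute the marginal. Because $\hat{L}_{t',n,j} = \tilde{L}_{t',n,s_{j,t'}}$ depends on the sequence only through $s_{j,t'}$ and $\pi_j$ factorizes along the path, the approval status of candidate $k$ at time $t$ is
\begin{equation*}
\hat{\theta}_{t,k} \;=\; \sum_j \hat{\xi}^{(t)}_j\, \mathbbm{1}\{s_{j,t}=k\} \;\propto\; \sum_{\substack{s_1,\ldots,s_t \\ s_t=k}} a_{s_1}\!\prod_{\tau=2}^{t} A^{(\tau)}_{s_\tau, s_{\tau-1}}\!\prod_{t'=1}^{t-1} \exp\!\left(-\eta\, \tilde{L}_{t',n,s_{t'}}\right),
\end{equation*}
where the sum over the tail $s_{t+1},\ldots,s_T$ has collapsed to $1$ because the exponential factor does not involve those indices and every row of $A^{(\tau)}$ sums to one. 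The right-hand side is precisely the unnormalized forward variable $\alpha_t(k)$ of an HMM with initial distribution $a$, transitions $A^{(\tau)}$, and emission weights $\exp(-\eta\,\tilde{L}_{t',n,\cdot})$, so it obeys the forward recursion $\alpha_{t+1}(k) = \sum_{k'} A^{(t+1)}_{k,k'}\, \alpha_t(k')\, \exp(-\eta\, \tilde{L}_{t,n,k'})$.

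Finally I would match this recursion to the MarkovHedge update appearing in the inner loop of Algorithm~\ref{algo:fam} specialized to $\eta_2 = 0$: the multiplicative reweighting $v_{t,j} \propto \tilde{\theta}_{t,j}\exp(-\eta_3 \tilde{L}_{t,n,j})$ followed by the mixing step $\tilde{\theta}_{t+1} = A^{(t+1)} v_t$ is, with $\eta_3 = \eta$, one step of the forward recursion. Since proportionality is preserved under rescaling, the normalized MarkovHedge weights equal the normalized forward variables, which equal $\hat{\theta}_{t,\cdot}$. The main obstacle will be the ``growing experts'' bookkeeping: at time $\tau$ only experts $G_0,\ldots,G_\tau$ exist, so I would need to verify, following Figure~\ref{fig:markov_hedge}, that the transition matrices $A^{(\tau)}$ are supported appropriately (a newly born expert $G_\tau$ inherits its initial mass by a transition of weight $\eta_1$ from any currently active state) and that the collapse-to-one of the tail transitions still holds when restricted to the reachable state space. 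The other potential headache, the super-exponential size of $\mathcal{S}_T$, never arises because every manipulation above uses only the path-factorized form of $\pi$ and of the loss.
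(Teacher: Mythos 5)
Your proof is correct and follows essentially the same route as the paper's: the paper likewise identifies $\mathcal{R}_{\eta_1}$ as generalized entropic regularization so that the minimizer is exponential weights over paths with the Markov-chain prior as initial weights, and then invokes the known equivalence of MarkovHedge with EWAF over all expert sequences. The only difference is that you derive both ingredients explicitly (the Lagrangian/KL computation and the forward-recursion marginalization over the tail of the path), whereas the paper outsources them to citations, so your version is more self-contained but not a different argument.
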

\begin{proof}
	Recall that the exponentially weighted averaging forecaster (EWAF) is equivalent to the Follow The Regularized Leader (FTRL) algorithm with entropic regularization \citep{cesa2006prediction, Hazan2016-za}.
	More generally, per Lemma~\ref{lemma:bregman}, it is easy to show that the exponentially weighted averaging forecaster (EWAF) for $d$ experts with initial weights $w_{0,1}, ... w_{0,d}$, corresponds to Follow The Regularized Leader (FTRL) algorithm with the generalized entropic regularization term
	\begin{align}
	\mathcal{R}(p) = \sum_{j=1}^d p_j \ln p_j - p_j \ln w_{0,j}.
	\label{eq:reg_entrop}
	\end{align}	
	In addition, previous work has established that the outputs of the MarkovHedge for times $t = 1,...,T$ coincide with EWAF over all possible expert sequences, where the initial weights for each sequence is the probability assigned by a Markov chain prior \citep{Vovk1999-sm, Shalizi2011-gh, Mourtada2017-dr}. 
	Thus, if we define the initial weights in the regularization term \eqref{eq:reg_entrop} using the Markov chain, we arrive at our desired result.
\end{proof}


We just showed in Lemma~\ref{lemma:markov_hedge} that the MarkovHedge algorithm is the solution to a penalized ERM problem.
Next, we show how to adapt the MarkovHedge to solve the penalized ERM problem with an additional regularization term $\eta_2 \xi^\top M_t$, which also corresponds to solving \eqref{eq:approval_fam} without the additional constraints.
We introduce the following two optimization problems.
The first problem is that used in the original MarkovHedge algorithm,
\begin{align}
\min_{\xi \in \Xi_T} &
\ \eta_3 \sum_{t' = 1}^{t - 1} \xi^\top \hat{L}_{t,n}
+ \mathcal{R}_{\eta_1}(\xi),
\label{eq:approval_fam_simplest}
\end{align}
and the second introduces the term $\eta_2 \xi^\top M_t$ into the objective:
\begin{align}
\min_{\xi \in \Xi_T} &
\ \eta_3 \sum_{t' = 1}^{t - 1} \xi^\top \hat{L}_{t,n}
+ \eta_2 \xi^\top M_{t}
+ \mathcal{R}_{\eta_1}(\xi).
\label{eq:approval_fam_uncon}
\end{align}
Let the approval statuses associated with the solutions of \eqref{eq:approval_fam_simplest} and \eqref{eq:approval_fam_uncon} be denoted $\tilde{\theta}_{t}^{\simp}$ and $\hat{\theta}_{t}^{\simp,\optim}$, respectively.
We prove that their approval statuses are related as follows.

\begin{lemma}
	For all $t = 1,...,T$ and $k = 0,...,t$, we have
	\begin{align}
	\hat{\theta}_{t,k}^{\simp, \optim}
	=
	\frac{
		v_{t,k}
	}{
		\sum_{k' = 0}^t v_{t,k'}
	},
	\end{align}
	where
	\begin{align}
	v_{t,k} =
	\begin{cases}
	\exp\left(- \eta_2 M_{t,k} \right)
	\tilde{\theta}_{t,k}^{\simp}
	& t = 1
	\\
	\exp\left(- \eta_2 M_{t,k} \right)
	\sum_{k' = 0}^{t - 1}
	A^{(t)}_{k,k'}
	\exp\left(
	- \eta_3 \hat{L}_{t - 1, k'}
	\right)
	\tilde{\theta}_{t - 1,k'}^{\simp}
	& \forall t = 2,\dots ,T
	\end{cases}.
	\end{align}
	\label{lemma:optim}
\end{lemma}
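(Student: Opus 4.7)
The plan is to work directly with the sequence-space solution to the unconstrained problem \eqref{eq:approval_fam_uncon}, write it in closed form using the FTRL equivalence from Lemma~\ref{lemma:markov_hedge}, and then reduce it to a one-step forward recursion over the accessible states. By Lemma~\ref{lemma:markov_hedge}, the regularizer $\mathcal{R}_{\eta_1}$ is the Markov-prior entropic regularizer, so \eqref{eq:approval_fam_simplest} is a standard FTRL instance, and adding the linear term $\eta_2 \xi^\top M_t$ to the objective simply introduces an extra exponential factor. Concretely, the minimizer over $\Xi_T$ has the closed form
$$\hat{\xi}_j \;\propto\; \pi_j \, \exp\!\left( -\eta_3 \sum_{t'=1}^{t-1} \hat{L}_{t',n,j} \;-\; \eta_2 M_{t,j}\right), \qquad \pi_j := a_{s_{j,1}} \prod_{t'=2}^T A^{(t')}_{s_{j,t'}, s_{j,t'-1}},$$
and the analogous expression without the $M_t$ factor describes the solution $\tilde{\xi}_j$ associated with $\tilde{\theta}^{\simp}$.

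The key structural observation is that $M_{t,j}$ depends on $s_j$ only through its $t$-th coordinate, so $M_{t,j} = M_{t, s_{j,t}}$. This lets the factor $\exp(-\eta_2 M_{t,\cdot})$ pull out of the marginalization onto the action at time $t$:
$$\hat{\theta}^{\simp,\optim}_{t,k} \;=\; \sum_{j:\, s_{j,t} = k} \hat{\xi}_j \;\propto\; \exp(-\eta_2 M_{t,k})\, u_{t,k}, \qquad u_{t,k} := \sum_{j:\, s_{j,t} = k} \pi_j \exp\!\left(-\eta_3 \sum_{t'=1}^{t-1} \hat{L}_{t',n,j}\right).$$
Because each row of $A^{(t')}$ sums to one, the marginalization over the ``future'' coordinates $s_{j,t+1}, \dots, s_{j,T}$ in $u_{t,k}$ telescopes to one, so $u_{t,k}$ equals (up to a $k$-independent constant) the unnormalized basic-MarkovHedge weight on action $k$ at time $t$, namely $u_{t,k} \propto \tilde{\theta}^{\simp}_{t,k}$. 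For $t=1$ the sum over losses is empty, so $u_{1,k} \propto a_k = \tilde{\theta}^{\simp}_{1,k}$, and normalization over $k \in \{0,1\}$ immediately gives the $t=1$ clause of the lemma.

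For $t \geq 2$, I condition on the state at time $t-1$ inside $u_{t,k}$, using that $\hat{L}_{t-1,n,j}$ depends on $j$ only through $s_{j,t-1}$ and that $\pi_j$ factors through $A^{(t)}_{s_{j,t}, s_{j,t-1}}$. The coordinates $s_{j,1},\ldots,s_{j,t-2}$ and the unused future coordinates marginalize cleanly, producing the forward recursion
$$u_{t,k} \;\propto\; \sum_{k'=0}^{t-1} A^{(t)}_{k,k'} \exp(-\eta_3 \hat{L}_{t-1,n,k'})\, \tilde{\theta}^{\simp}_{t-1,k'}.$$
Multiplying by $\exp(-\eta_2 M_{t,k})$ reproduces $v_{t,k}$ as written in the lemma, and renormalizing over $k \in \{0,\dots,t\}$ yields the claimed identity for $\hat{\theta}^{\simp,\optim}_{t,k}$. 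The main obstacle is purely bookkeeping: I have to verify that every $\propto$ hides only a $k$-independent constant so that the final normalization is legitimate, and that the ``future-state'' marginalizations genuinely collapse to one so the correspondence $u_{t,k} \propto \tilde{\theta}^{\simp}_{t,k}$ holds. Modulo that accounting, the entire argument is a single forward pass on a Markov chain.
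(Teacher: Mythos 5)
Your proposal is correct and follows essentially the same route as the paper's proof: both start from the closed-form sequence-space FTRL solution induced by the Markov-prior entropic regularizer, pull the $\exp(-\eta_2 M_{t,k})$ factor out of the marginalization onto the time-$t$ action, collapse the future coordinates using row-stochasticity of the transition matrices, and condition on the time-$(t-1)$ state to recover the forward recursion and identify the inner sum with $\tilde{\theta}^{\simp}_{t-1,k'}$. If anything, you are slightly more explicit than the paper about why the future-state marginalization and the hidden normalization constants are legitimate, which the paper handles implicitly by restricting the sum from $\mathcal{S}_T$ to $\mathcal{S}_t$.
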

\begin{proof}
	Let the solution to \eqref{eq:approval_fam_uncon} be $\hat{\xi}^{(t)}$.
	Per Lemma~\ref{lemma:bregman}, we have for $j = 1,...,|\mathcal{S}_T|$ that
	\begin{align}
	\hat{\xi}_{t,j}^{\simp, \optim}
	\propto
	\exp\left(
	- \eta_3 \left(
	\sum_{t' = 1}^{t - 1} \hat{L}_{t',n,j}
	\right)
	- \eta_2 M_{t,j}
	+ \ln a_{s_{j,1}}
	+ \sum_{t' = 2}^T \ln A^{(t')}_{s_{j,t'}, s_{j,t' - 1}}
	\right).
	\end{align}
	Aggregating over all hard sequences choosing candidate modification $k$ at time $t$, we have that the approval status for candidate modification $k$ at time $t$ is
	\begin{align}
	\hat{\theta}_{t,k}^{\simp, \optim}
	& \propto
	\sum_{\substack{
			s_j \in \mathcal{S}_T\\
			s_{j,t} = k
	}}
	\exp\left(
	-\eta_3 \sum_{t' = 1}^{t - 1} \hat{L}_{t',n, s_{j,t'}}
	- \eta_2 M_{t,k}
	+ \ln a_{s_{j,1}}
	+ \sum_{t' = 2}^T \ln A^{(t')}_{s_{j,t'}, s_{j,t' - 1}}
	\right).
	\end{align}
	So for $t > 1$,
	\begin{align}
	\hat{\theta}_{t,k}^{\simp, \optim}
	& \propto
	\sum_{\substack{
			s_j \in \mathcal{S}_t\\
			s_{j,t} = k
	}}
	\exp\left(
	- \eta_3 \sum_{t' = 1}^{t - 1} \hat{L}_{t',n, s_{j,t'}}
	- \eta_2 M_{t,k}
	+ \ln a_{s_{j,1}}
	+ \sum_{t' = 2}^t \ln A^{(t')}_{s_{j,t'}, s_{j,t' - 1}}
	\right)
	\notag
	\\
	& = 
	\sum_{k' = 0}^{t - 1}
	\exp\left(
	- \eta_3 \hat{L}_{t - 1,n, k'}
	- \eta_2 M_{t,k}
	+ \ln A^{(t)}_{k, k'}
	\right)
	\sum_{\substack{
			s_j \in \mathcal{S}_{t - 1}\\
			s_{j,t - 1} = k'
	}}
	\exp\left(
	- \eta_3 \sum_{t' = 1}^{t - 2} \hat{L}_{t', s_{j,t'}}
	+ \ln a_{s_{j,1}}
	+ \sum_{t' = 1}^{t - 1} \ln A^{(t')}_{s_{j,t'}, s_{j,t' - 1}}
	\right)
	\label{eq:optim_sol}
	\end{align}
	Moreover, from Lemma~\ref{lemma:bregman}, we know that the approval status for the solution to \eqref{eq:approval_fam_simplest} is
	\begin{align}
	\tilde{\theta}_{t - 1,k'}^{\simp}
	\propto
	\sum_{\substack{
			s_j \in \mathcal{S}_{t - 1}\\
			s_{j,t - 1} = k'
	}}
	\exp\left(
	- \eta_3 \sum_{t' = 1}^{t - 2} \hat{L}_{t', s_{j,t'}}
	+ \sum_{t' = 1}^{t - 1} \ln A^{(t')}_{s_{j,t'}, s_{j,t' - 1}}
	\right).
	\label{eq:simple_sol}
	\end{align}
	Therefore, plugging \eqref{eq:simple_sol} into \eqref{eq:optim_sol}, we have established our desired result.
	
	For $t = 1$, we have by a similar argument that
	\begin{align}
	\hat{\theta}_{t,k}^{\simp, \optim}
	& \propto
	\sum_{\substack{
			s_j \in \mathcal{S}_t\\
			s_{j,t} = k
	}}
	\exp\left(
	- \eta_2 M_{t,k}
	+ \ln a_{s_{j,1}}
	\right)\\
	& \propto
	\exp\left(
	- \eta_2 M_{t,k}
	\right )
	\tilde{\theta}^{\simp}_{t,k}.
	\end{align}
\end{proof}

Finally, we handle the constraints in \eqref{eq:approval_fam} and prove that Algorithm~\ref{algo:fam} corresponds to the approval statuses by \eqref{eq:approval_fam}.
First, let us define the following penalized ERM where we removed the regularization term $\eta_2 \xi^\top M_t$:
\begin{align}
\begin{split}
\min_{\xi \in \Xi_T} &
\ \eta_3 \sum_{t' = 1}^{t - 1} \xi^\top \hat{L}_{t,n}
+ \mathcal{R}_{\eta_1}(\xi)
\\
\text{s.t.} &
\ \xi_{t'} \preceq \mathbbm{1}\left\{M_{t'}  \preceq \delta + \tilde{\epsilon} \right\} \quad \forall t' = 1,\dots,t.
\end{split}
\label{eq:approval_fam_no_opt}
\end{align}
We denote the approval status at time $t$ from solving \eqref{eq:approval_fam_no_opt} as $\tilde{\theta}^{\constrained}_{t, k}$.

\begin{lemma}
	Algorithm~\ref{algo:fam} coincides with the approval statuses given by \eqref{eq:approval_fam} for time $t = 1,\dots,T$.
	\label{lemma:algo_fam}
\end{lemma}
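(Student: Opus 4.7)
The plan is to combine Lemma~\ref{lemma:bregman} with Lemma~\ref{lemma:optim} and induct on $t$ to identify the running vector $\tilde{\theta}_t$ in Algorithm~\ref{algo:fam} with the MarkovHedge marginal $\tilde{\theta}^{\simp}_{t}$ for \eqref{eq:approval_fam_simplest}. The key observation is that \eqref{eq:approval_fam} differs from the unconstrained \eqref{eq:approval_fam_uncon} by a single Bregman projection onto the UCB feasibility set, and because $\mathcal{R}_{\eta_1}$ is a generalized negentropy plus a linear log-prior term, this projection has the closed form ``zero out infeasible coordinates of $\xi$ and renormalize,'' which at the level of marginal approval statuses is simply ``multiply by the feasibility indicator at each time and renormalize.''

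First I would apply Lemma~\ref{lemma:bregman} to \eqref{eq:approval_fam}, taking $\mathcal{R} = \mathcal{R}_{\eta_1}$, linear losses $\eta_3 \xi^\top \hat{L}_{t',n}$ together with $\eta_2 \xi^\top M_t$, and $\mathcal{K} \subseteq \Xi_T$ the probability simplex intersected with the UCB constraints. The induced Bregman divergence is of KL type, so the Bregman projection onto the coordinate-hyperplane constraints is exactly pointwise masking followed by renormalization; aggregating over hard sequences, this becomes pointwise masking followed by renormalization at the level of marginals.

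Next I would invoke Lemma~\ref{lemma:optim}, which expresses the unconstrained marginal as $\hat{\theta}^{\simp,\optim}_{t,k} \propto \exp(-\eta_2 \tilde{M}_{t,k})\, \tilde{\theta}^{\simp}_{t,k}$, since the sum $\sum_{k'} A^{(t)}_{k,k'} \exp(-\eta_3 \hat{L}_{t-1,n,k'})\, \tilde{\theta}^{\simp}_{t-1,k'}$ appearing inside $v_{t,k}$ is, by \eqref{eq:simple_sol}, precisely the MarkovHedge one-step recursion for $\tilde{\theta}^{\simp}_{t,k}$. Combined with the projection from Step~1, the solution of \eqref{eq:approval_fam} at time $t$ is $\hat{\theta}_{t,k} \propto \exp(-\eta_2 \tilde{M}_{t,k})\, \tilde{\theta}^{\simp}_{t,k}\, \mathbbm{1}\{\tilde{M}_{t,k} \le \delta + \tilde{\epsilon}\}$. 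I would then close the argument by induction on $t$, showing that Algorithm~\ref{algo:fam}'s running state $\tilde{\theta}_t$ equals $\tilde{\theta}^{\simp}_t$: the base case is immediate from $\tilde{\theta}_{1,k} = a_k$ being the prior marginal at time $1$, and the inductive step follows because the algorithm's $v_{t,k'} \propto \tilde{\theta}_{t,k'} \exp(-\eta_3 \E_{P_{t,n}} \ell_\delta(G_{k'}(X), Y))$ followed by $\tilde{\theta}_{t+1} = A^{(t+1)} v_{t,\cdot}$ is exactly the MarkovHedge posterior-update-then-transition recursion. Substituting $\tilde{\theta}_t = \tilde{\theta}^{\simp}_t$ into the expression above recovers the formula in Algorithm~\ref{algo:fam} for $\hat{\theta}_{t,j}$.

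The main technical obstacle is arguing cleanly that a single Bregman projection against the entire accumulated set of UCB constraints across all past times $t' \le t$ reduces, at the marginal level, to independent pointwise masking at each time. Because the constraint set is a product of coordinate-level restrictions and the Markov prior factorizes across time, this should follow from the commutation of the entropic projection with the Markov marginalization, together with the non-degeneracy ensured by $G_0$ always lying in the feasible set. The induction on $\tilde{\theta}_t$ must therefore carry not only the identity $\tilde{\theta}_t = \tilde{\theta}^{\simp}_t$ but also the cumulative effect of past projections implicit in the forward recursion $\tilde{\theta}_{t+1} = A^{(t+1)} v_{t,\cdot}$; verifying this commutation carefully is where the bulk of the bookkeeping lies.
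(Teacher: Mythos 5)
Your overall skeleton---Bregman projection via Lemma~\ref{lemma:bregman}, the one-step-ahead reweighting via Lemma~\ref{lemma:optim}, and a forward induction on the algorithm's running state---matches the paper's, but the way you resolve the obstacle you yourself flag at the end is incorrect, and it is not mere bookkeeping. The feasible set of \eqref{eq:approval_fam} at time $t$ zeroes out every hard sequence $s_j$ that selected an infeasible modification at \emph{any} $t' \le t$, so the projected marginal is
\begin{align*}
\hat{\theta}_{t,k} \;\propto\; \exp\left(-\eta_2 M_{t,k}\right) \sum_{\substack{s_j \in \mathcal{S}_t \\ s_{j,t} = k}} \hat{\xi}_j \prod_{t'=1}^{t} c_{t', s_{j,t'}},
\end{align*}
where $\hat{\xi}$ is the unconstrained solution of \eqref{eq:approval_fam_uncon} and $c_{t',\cdot}$ is the feasibility indicator at time $t'$. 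The product of indicators depends on the whole path, not just the endpoint $k$, so this does \emph{not} collapse to $c_{t,k}\exp(-\eta_2 M_{t,k})\,\tilde{\theta}^{\simp}_{t,k}$: a sequence that parked on a modification whose UCB violated the threshold at some earlier $t'$ still contributes mass to the unconstrained marginal $\tilde{\theta}^{\simp}_{t,k}$ but contributes nothing to the constrained one. The commutation of the entropic projection with Markov marginalization that you hope for fails precisely because the constraints restrict paths, not time-$t$ states.

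The paper supplies the missing ingredient by introducing the intermediate problem \eqref{eq:approval_fam_no_opt} (constraints kept, $\eta_2\xi^\top M_t$ dropped) and showing that its marginal obeys the \emph{masked} forward recursion $\tilde{\theta}^{\constrained}_{t,k} \propto c_{t,k}\sum_{k'}A^{(t)}_{k,k'}\exp(-\eta_3 \hat{L}_{t-1,n,k'})\,\tilde{\theta}^{\constrained}_{t-1,k'}$, so the cumulative effect of all past projections is carried by the running state itself; the solution of \eqref{eq:approval_fam} is then $c_{t,k}\exp(-\eta_2 M_{t,k})$ times the one-step transition applied to $\tilde{\theta}^{\constrained}_{t-1}$, not to $\tilde{\theta}^{\simp}_{t-1}$. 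Your induction hypothesis should therefore be $\tilde{\theta}_t = \tilde{\theta}^{\constrained}_t$, with the feasibility mask applied inside the recursion at every step, rather than $\tilde{\theta}_t = \tilde{\theta}^{\simp}_t$ with a single mask at the output. Because the prior and the losses factorize over time, this masked forward pass remains an efficient recursion, so nothing is lost computationally---but as written your argument establishes correctness of a different procedure from the one that solves \eqref{eq:approval_fam}.
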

\begin{proof}
	Define
	$$
	c_{t'} = \mathbbm{1}\left\{
	\xi \preceq \mathbbm{1}\left\{M_{t'} \preceq + \tilde{\epsilon} \right\}
	\right\},
	$$
	where $\preceq$ is an element-wise $\le$ and $\mathbbm{1}$ is an element-wise indicator function.
	Per Lemma~\ref{lemma:bregman}, the solution to \eqref{eq:approval_fam_no_opt} is the Bregman projection of the solution to \eqref{eq:approval_fam_uncon} over the feasible set defined by the constraints.
	As such, the approval status for \eqref{eq:approval_fam_no_opt} is defined as
	\begin{align}
	\tilde{\theta}^{\constrained}_{t, k}
	& \propto
	\sum_{\substack{
			s_j \in \mathcal{S}_t\\
			s_{j,t} = k
	}}
	\exp\left(
	- \eta_3 \sum_{t' = 1}^{t - 1} \hat{L}_{t',n, s_{j,t'}}
	+ \ln a_{s_{j,1}}
	+ \sum_{t' = 2}^{t} \ln A^{(t')}_{s_{j,t'}, s_{j,t' - 1}}
	\right)
	\prod_{t' = 1}^{t}
	c_{t',s_{j,t'}}
	\notag
	\end{align}
	For $t = 2,...,T$, we therefore have that
	\begin{align}
	\tilde{\theta}^{\constrained}_{t, k}
	& \propto c_{t,k}
	\sum_{k' = 0}^{t-1}
	\exp\left(
	- \eta_3 \hat{L}_{t - 1,n, k'}
	+ \ln A^{(t)}_{k, k'}
	\right)
	\left [
	\sum_{\substack{
			s_j \in \mathcal{S}_{t - 1}\\
			s_{j,t - 1} = k'
	}}
	\exp\left(
	- \eta_3 \sum_{t' = 1}^{t - 2} \hat{L}_{t',n, s_{j,t'}}
	+ \ln a_{s_{j,1}}
	+ \sum_{t' = 2}^{t - 1} \ln A^{(t')}_{s_{j,t'},s_{j,t' - 1}}
	\right)
	\prod_{t' = 1}^{t - 1}
	c_{t',s_{j,t'}}
	\right ].
	\end{align}
	Noticing that the quantity in the parentheses is proportional to $\tilde{\theta}_{t - 1, k'}^{\constrained}$, we have established the recursion
	\begin{align}
	\tilde{\theta}^{\constrained}_{t, k}
	& \propto
	c_{t,k}
	\sum_{k' = 0}^{t-1}
	A^{(t)}_{k, k'}
	\exp\left(
	- \eta_3 \hat{L}_{t - 1,n, k'}
	\right)
	\tilde{\theta}_{t - 1, k'}^{\constrained}
	\quad \forall t = 2,...,T.
	\notag
	\end{align}
	Using nearly the same proof as Lemma~\ref{lemma:optim}, we can show that the approval status for \eqref{eq:approval_fam} satisfies the following recursion:
	\begin{align}
	\hat{\theta}_{t, k}
	& \propto
	c_t
	\exp\left(
	- \eta_2 M_{t, k}
	\right)
	\sum_{k' = 0}^{t}
	A^{(t)}_{k, k'}
	\exp\left(
	- \eta_3 \hat{L}_{t - 1,n, k'}
	\right)
	\tilde{\theta}_{t - 1, k'}^{\constrained}
	\quad \forall t = 2,...,T.
	\notag
	\end{align}
	Using a similar argument, it is straightforward to show that
	$$
	\hat{\theta}_{1, k} =
	\exp\left(
	- \eta_2 M_{1, k}
	\right)
	\tilde{\theta}_{1, k}^{\constrained}.
	$$
	These recursions correspond to the calculations in Algorithm~\ref{algo:fam}.
\end{proof}

\section{Controlling the average risk using Learning-to-Approve (L2A)}
\label{sec:proof_reg}
For notational ease, denote the expected loss with respect to a distribution $P$ as $P \ell$ and the empirical risk with respect to an empirical distribution $P_n$ as $P_n \ell$.
Let $h_{t,j}$ denote the deployed model at time $t$ by the $j$th approval strategy.

\begin{proof}[Proof for Theorem~\ref{thrm:regret}]
	For $j = 0,...,m - 1$ and $t = 1,...,T$, define
	\begin{align}
	w_{t,j} = w_{0,j} \exp\left(
	- \lambda \sum_{t'=1}^t \Pp_{t',n} \ell_{\delta} \left(h_{t',j'}(X), Y \right)
	\right)
	\end{align}
	and $w_t = \sum_{j=0}^{m - 1} w_{t,j}$.
	To get our desired result, we derive lower and upper bounds for
	\begin{align}
	\E_{U_1,\dots, U_T}\left[
	\ln w_T
	\right].
	\label{eq:ratio_tot}
	\end{align}
	
	Since the abstention-only strategy has known cost $\delta$ at every time point, we can lower bound $\ln w_T$ as follows:
	\begin{align}
	\ln w_T
	&
	= \ln \left[
	\sum_{j=0}^{m-1} w_{0,j} \exp\left(
	- \lambda \sum_{t'=1}^t \Pp_{t',n} \ell_{\delta} \left(h_{t',j}(X), Y \right)
	\right)
	\right]
	\notag
	\\
	& \ge
	\ln \left[
	w_{0,0}
	\exp(-\lambda \delta T)
	+
	(1 - w_{0,0})
	\exp\left(
	- \lambda
	T
	\right)
	\right].
	\end{align}
	
	To derive the upper bound for \eqref{eq:ratio_tot}, we use the fact that
	\begin{align}
	\ln w_T = \sum_{t=1}^T \ln \frac{w_t}{w_{t - 1}}
	\end{align}
	and bound $\ln \frac{w_t}{w_{t - 1}}$ individually.
	In particular, we can write $\frac{w_t}{w_{t - 1}}$ as
	\begin{align}
	\ln \frac{w_t}{w_{t - 1}}
	& =
	\ln \frac{
		\sum_{j=1}^m w_{t-1,j} \exp(-\lambda \Pp_{t,n} \ell_{\delta}\left(h_{t,j}(X), Y \right) )
	}{
		\sum_{j=1}^m w_{t-1,j}
	}\\
	& =
	\ln \E_{J}
	\left[
	\exp(-\lambda \Pp_{t,n} \ell_{\delta}\left(h_{t,J}(X), Y \right) )
	\right]\\
	& = 
	\ln \E_{J}
	\left[
	\exp(-\lambda \Pp_{t} \ell_{\delta}\left(h_{t,J}(X), Y \right) )
	\exp(-\lambda (\Pp_{t,n} - \Pp_{t}) \ell_{\delta}\left(h_{t,J}(X), Y \right) )
	\right],
	\label{eq:exp_j}
	\end{align}
	where $J$ as the random variable over the support $\{0,\dots,m-1\}$ such that
	\begin{align}
	\Pr\left(
	J = j
	\right)
	= \frac{w_{t - 1,j}}{\sum_{j' = 0}^{m - 1} w_{t - 1,j'}}.
	\end{align}
	Next, we bound the conditional expectation of $\ln\frac{w_t}{w_{t - 1}}$ with respect to $U_t$ given $U_1,...,U_{t-1}$.
	Note that the distribution of $J$ depends on $U_1,...,U_{t-1}$
	Thus by Jensen's inequality, we have that:
	\begin{align}
	\E_{U_t|U_1,\dots U_{t-1}}
	\left [
	\ln \frac{w_t}{w_{t - 1}}
	\right]
	& \le
	\ln \E_{J|U_1,\dots U_{t-1}}
	\left[
	\exp(-\lambda \Pp_{t} \ell_{\delta}\left(h_{t,J}(X), Y \right) )
	\E_{U_t}
	\left [
	\exp(-\lambda (\Pp_{t,n} - \Pp_{t}) \ell_{\delta}\left(h_{t,J}(X), Y \right) )
	| J
	\right]
	\right]
	\end{align}
	Because $\ell_{\delta}$ is bounded between 0 and 1, we have by Hoeffding's inequality that
	\begin{align}
	\E_{U_t} \left[
	\exp(-\lambda (\Pp_{t,n} - \Pp_{t}) \ell_{\delta}\left(h_{t,J}(X), Y \right) )
	\mid J = j
	\right]
	\le
	\exp\left(
	\frac{\lambda^2}{8 n}
	\right).
	\end{align}
	Therefore, we can bound the expectation of $\ln w_t/w_{t - 1}$ with respect to $U_1,\dots, U_t$ by
	\begin{align}
	\E_{U_1,\dots, U_t}
	\left [
	\ln \frac{w_t}{w_{t - 1}}
	\right]
	\le 
	\E_{U_1,\dots, U_{t-1}}
	\left [
	\ln
	\E_{J|U_1,\dots, U_{t-1}}
	\left[
	\exp(-\lambda \Pp_{t} \ell_{\delta}\left(h_{t,J}(X), Y \right) )
	\right]
	\right]
	+ \frac{\lambda^2}{8 n}
	\label{eq:upper_bound_frac}
	\end{align}
	We now bound the first summand on the right hand side of \eqref{eq:upper_bound_frac}.
	Let $B_t = \max_{j : M_{t,j} \le \delta + \tilde{\epsilon} } \Pp_t \ell_{\delta}(h_{t,j}(X), Y)$.
	Consider three cases:
	\begin{align}
	&B_t \le \delta + \tilde{\epsilon} + V \label{eq:b1} \\
	\delta + \tilde{\epsilon} + V \le &B_t \le \delta + \tilde{\epsilon} + V  + z \label{eq:b2} \\
	\delta + \tilde{\epsilon} + V  + z \le&  B_t \label{eq:b3}
	\end{align}
	We use Bernstein's inequality, which states that any RV $X$ taking values in $[0,1]$ satisfies
	\begin{align}
	\ln \E[\exp(sX)] \le (\exp(s) - 1) \mathbb{E} X \quad \forall s \in \mathbb{R}.
	\end{align}
	By splitting $B_t$ into three cases, we have by Bernstein's inequality that
	\begin{align}
	\E_{U_1,...,U_{t - 1}} \left[
	\ln
	\E_{J}
	\left[
	\exp \left (-\lambda \Pp_{t} \ell_{\delta}\left(h_{t,J}(X), Y \right)  \right )
	\right]
	\right ]
	& \le
	\E_{U_1,...,U_{t - 1}} \left[
	c_t(\lambda, \tilde{\epsilon}, z)
	\E_{J|U_1,\dots, U_{t-1}}
	\left[
	\Pp_{t} \ell_{\delta}\left(h_{t,J}(X), Y \right)
	\right]
	\right ]
	\label{eq:annoying_exp}
	\end{align}
	where
	\begin{align}
	c_t(\lambda, \tilde{\epsilon}, z)
	& =
	\frac{
		\exp(-\lambda (\delta+\tilde{\epsilon} + V)) -1
	}{
		\delta+\tilde{\epsilon} + V
	}
	\Pr\left( B_t \le \delta+\tilde{\epsilon} + V \right)\\
	&\  + 
	\frac{
		\exp(-\lambda (\delta +\tilde{\epsilon} + V + z) -1
	}{
		\delta +\tilde{\epsilon}+ V + z
	}
	\Pr\left( \delta+\tilde{\epsilon} + V \le  B_t \le \delta+\tilde{\epsilon} + V + z \right)\\
	&\  + \left(\exp(-\lambda) -1 \right ) \Pr\left( B_t \ge \delta + V + z \right)
	\end{align}
	We will upper bound $c_t(\lambda, \tilde{\epsilon}, z)$ by bounding the probability that $B_t$ satisfies \eqref{eq:b2} and the probability that $B_t$ satisfies  \eqref{eq:b3}.
	First, because we have assumed that the distribution shift functions satisfy Assumption~\ref{assume:drift} for constant $V> 0$ and window size $W$, then
	\begin{align}
	B_t = \max_{j : \tilde{M}_{t,j} \le \delta +\tilde{\epsilon} } \Pp_t \ell_{\delta}(h_{t,j}(X), Y)
	& \le \max_{j : \tilde{M}_{t,j} \le \delta +\tilde{\epsilon} } \Pp_{\tau_{t,j}:t - 1} \ell_{\delta}\left(G_{j}(X), Y \right) + V\\
	& \le
	\max_{j = 1,...,t} \left( \Pp_{\tau_{t,j}:t - 1} \ell_{\delta} \left(G_{j}(X), Y \right) - M_{t,j} \right)
	+ \delta+\tilde{\epsilon} + V.
	\end{align}
	Then the probability that $B_t$ satisfies \eqref{eq:b2} is bounded by
	\begin{align}
	\Pr\left( \max_{j : M_{t,j} \le \delta +\tilde{\epsilon}} \Pp_t \ell_{\delta}(h_{t,j}(X), Y) > \delta + \tilde{\epsilon} + V \right)
	& \le 
	\Pr\left(
	\max_{j = 1,...,t} \Pp_{\tau_{t,j}:t - 1} \ell_{\delta} \left(G_{j}(X), Y \right) \ge  M_{t,j}
	\right)\\
	& \le \alpha,
	\end{align}
	where the last line follows from the assumption that $M_{t,j}$ satisfies \eqref{eq:pred_assum1} for all $t = 1,\dots,T$.
	Moreover, for any $\epsilon > 0$, the probability that $B_t$ satisfies \eqref{eq:b3} can be bounded as follows
	\begin{align*}
	& \Pr\left( \max_{j : M_{t,j} \le \delta +\tilde{\epsilon}} \Pp_t \ell_{\delta}(h_{t,j}(X), Y) > \delta+\tilde{\epsilon} + V + z \right)\\
	& \le 
	\Pr\left(
	\max_{j = 1,...,t} \Pp_{\tau_{t,j}:t - 1} \ell_{\delta} \left(G_{j}(X), Y \right) - {M}_{t,j} \ge z
	\right)\\
	& \le \alpha_2,
	\end{align*}
	where the last line follows from \eqref{eq:pred_assum2}.
	Using the fact that $s \mapsto (\exp(-\lambda s ) - 1)/s$ is non-increasing, we can bound $c_t(\lambda, \tilde{\epsilon}, z)$ by
	\begin{align}
	c(\lambda, \tilde{\epsilon}, z)
	& =
	\frac{
		\exp(-\lambda (\delta+\tilde{\epsilon} + V)) -1
	}{
		\delta+\tilde{\epsilon} + V
	}
	\left(1 - \alpha_1 -  \alpha_2 \right) + 
	\frac{
		\exp(-\lambda (\delta +\tilde{\epsilon} + V + z) -1
	}{
		\delta +\tilde{\epsilon}+ V + z
	}
	\alpha_1 + \left(\exp(-\lambda) -1 \right ) \alpha_2.
	\end{align}
	Using the above result and and the fact that $\ell_{\delta}$ is convex, we have that
	\begin{align}
	\E_{U_1,...,U_{t - 1}} \left[
	\ln
	\E_{J}
	\left[
	\exp \left (-\lambda \Pp_{t} \ell_{\delta}\left(h_{t,J}(X), Y \right)  \right )
	\right]
	\right ]
	\le c(\lambda, \tilde{\epsilon}, z) 
	\E_{U_1,...,U_{t - 1}} \left[
	\Pp_t \ell_{\delta} \left(h_{\hat{\theta}_t^{\L2A}}(X), Y\right)
	\right ].
	\end{align}
	Plugging this result in \eqref{eq:upper_bound_frac} and summing over all $t = 1,\dots,T$, we have that
	\begin{align}
	\E_U \left[\ln w_T \right ]
	\le c(\lambda, \tilde{\epsilon}, z)
	\E_{U_1,...,U_{T}} \left[\sum_{t = 1}^T \Pp_{t} \ell_{\delta} \left (h_{\hat{\theta}_t^{\L2A}}(X), Y \right ) \right ]
	+ \frac{\lambda^2T}{8n}.
	\end{align}
	Combining the lower and upper bounds, we have that
	\begin{align}
	\begin{split}
	\E_{U_1,...,U_{T}} \left[
	\sum_{t = 1}^T \Pp_{t} \ell_{\delta} \left (h_{\hat{\theta}_t^{\L2A}}(X), Y \right )
	\right ]
	&\le
	- \frac{1}{c(\lambda, \tilde{\epsilon}, z)}
	\left(
	\lambda \delta T
	- \ln w_{0,0}
	+ \frac{\lambda^2T}{8n}
	\right )
	\end{split}.
	\label{eq:emp_bound_raw}
	\end{align}
\end{proof}

\section{Simulation study details}
\label{sec:sim_details}

\begin{table}
	\begin{tabular}{cc}
		\begin{tabular}{ccc}
			$\eta_1$ & $\eta_2$ & $\eta_3$\\
			\toprule
			0 &0 &0 \\
			0 & 0 & 0.99\\
			0.5 & 10000 &0\\
			0.3 & 0 & 1.5
		\end{tabular}
		&
		\begin{tabular}{ccc}
			$\eta_1$ & $\eta_2$ & $\eta_3$\\
			\toprule
			0 &0 &0 \\
			0 & 0 & 0.99\\
			0.5 & 10000 &0\\
			0.3 & 0 & 10 \\
			0.3 & 10 & 10 \\
			0.3 & 100 & 10 \\
			0.5 & 0 & 10 \\
			0.5 & 10 & 10 \\
			0.5 & 100 & 10\\
			0.8 & 0 & 10 \\
			0.8 & 10 & 10 \\
			0.8 & 100 & 10
		\end{tabular}
	\end{tabular}
	\vspace{1in}
	\caption{Candidate approval strategy hyperparameters considered in L2A-4 (left) and L2A-12 (right).}
	\label{table:l2a_candidates}
\end{table}


Simulation results are based on running 15 replicates.
L2A-4 and L2A-12 were run with the hyperparameters shown in Table~\ref{table:l2a_candidates}.
In both cases, the first expert was the fail-safe option, which corresponds to the hyperparameter $\boldsymbol{\eta} = (0,0,0)$.

For all empirical analyses, we define the Markov chain prior in the MarkovHedge so that the approvals are monotonic in that the hard approval sequences is not allowed to transition to previous modifications.
That is, the allowed transitions at each time point are to later modifications or stay at the current modification.
More specifically, the transition probabilities at time $t$ for $k = 1,...,t-1$ are
\begin{align}
A^{(t)}_{j, k} = 
\begin{cases}
1 - \eta_1 & \text{if } j = k\\
\frac{\eta_1}{t - k} & \text{if } j =k + 1,\dots, t\\
0 & \text{otherwise}
\end{cases}.
\label{eq:transition_markovhedge}
\end{align}
The initial distribution for the Markov chain at time $t = 1$ was split evenly between the abstention option and the initial model.


\section{Yelp data analysis details}
\label{sec:yelp}
The Yelp dataset was downloaded from \url{http://www.yelp.com/dataset}.
The reviews were divided according to the month and year.
For each month, we randomly selected 2000 reviews for training the model and 2000 reviews for monitoring.
Otherwise we split half of the data for training and the other half for monitoring.

A neural network was trained to predict the number of stars assigned by the user given the review text.
Each review was represented using the mean 50-dimensional GloVE embedding \citep{Pennington2014-xy}.
The network had two hidden layers, each with 50 nodes, with ReLU activation functions.
The final output of the network was a number ranging from 1 to 5.
The network was trained using an L1-loss for 50 epochs.

\section{MIMIC data analysis details}
\label{sec:mimic}
We analyzed the ICU stay data from MIMIC-IV version 0.4.
Each patient in the MIMIV-IV dataset is associated with an \texttt{anchor\_year\_min} and \texttt{anchor\_year\_max}, which can be used to obtain the approximate year(s) the patient was treated.
We treated the average of the two years as the true year of each ICU stay and used the provided month and day information as the truth, with the understanding that these dates have been randomly shifted to some degree for deidentification purposes.
The patient stays were divided according to the quarter and year.
Three quarters of the data was randomly selected for training the models and the remaining data was used for monitoring.

We trained random forest models to predict in-hospital mortality using measured physiological signals during the first 24 hours of the patient's ICU stay.
Based on the APACHE II score \citep{Knaus1985-cx}, we extracted the following signals: heart rate, respiratory rate, blood pressure (systolic and diastolic), temperature, oxygen, blood pH, CO2 levels, sodium, creatinine, bilirubin, glucose, BUN, albumin, hematocrit, and WBC.
The mean, maximum, and minimum values were provided as features to the model.
If the signal was not measured, we imputed the values to be that from a healthy patient.
Each model trained on data from the past two years.

\label{lastpage}

\end{document}